\crefname{hypothesis}{Hypothesis}{Hypotheses}
\newtheorem{assumption}[theorem]{Assumption}
\newcommand\ubm{{\ensuremath{\bm{u}}}}
\newcommand\xbm{{\ensuremath{\bm{x}}}}
\newcommand\kbm{{\ensuremath{\bm{k}}}}
\newcommand\Kbm{{\ensuremath{\bm{K}}}}
\newcommand\fbm{{\ensuremath{\bm{f}}}}
\newcommand\ybm{{\ensuremath{\bm{y}}}}
\newcommand\vbm{{\ensuremath{\bm{v}}}}
\newcommand\hbm{{\ensuremath{\bm{h}}}}
\newcommand\Ibm{{\ensuremath{\bm{I}}}}
\newcommand\Qbm{{\ensuremath{\bm{Q}}}}
\newcommand\Dbm{{\ensuremath{\bm{D}}}}
\newcommand\zerobold{\ensuremath{\mathbf{0}}}
\title{On Improved Regret bounds in Bayesian Optimization with Gaussian noise}
\author{Jingyi Wang\thanks{Center for Applied Scientific Computing, Lawrence Livermore National Laboratory,
Livermore, CA 
  (\email{wang125@llnl.gov}).}
\and Haowei Wang \thanks{National University of Singapore}
\and Cosmin G. Petra\footnotemark[1]
\and Nai-Yuan Chiang\footnotemark[1]
}
\begin{document}
\newcommand{\Rbb}{\ensuremath{\mathbb{R} }}
\newcommand{\Pbb}{\ensuremath{\mathbb{P} }}
\newcommand{\Cbb}{\ensuremath{\mathbb{C} }}
\newcommand{\Sbb}{\ensuremath{\mathbb{S} }}
\newcommand{\Ebb}{\ensuremath{\mathbb{E} }}
\newcommand{\Vbb}{\ensuremath{\mathbb{V} }}
\newcommand{\Nbb}{\ensuremath{\mathbb{N} }}
\newcommand{\norm}[1]{\left\lVert {#1} \right\rVert}
\newcommand\epsbold{{\ensuremath{\boldsymbol{\epsilon}}}}

\maketitle

\begin{abstract}
  Bayesian optimization (BO) with Gaussian process (GP) surrogate models is a powerful black-box optimization method. 
  Acquisition functions are a critical part of a BO algorithm as they determine how the new samples are selected.
  Some of the most widely used acquisition functions include upper confidence bound (UCB) and Thompson sampling (TS). 
  The convergence analysis of BO algorithms has focused on the cumulative regret under both the Bayesian and frequentist settings for the objective. 
  In this paper, we establish new pointwise bounds on the prediction error of GP under the frequentist setting with Gaussian noise. Consequently, we prove improved convergence rates of cumulative regret bound for both GP-UCB and GP-TS.
  % where we close the existing gap between the Bayesian and frequentist settings.
 % In addition, we also prove tighter cumulative regret bounds for GP-TS in the frequentist setting.
Of note, the new prediction error bound under Gaussian noise can be applied to general BO algorithms and convergence analysis, \textit{e.g.}, the asymptotic convergence of expected improvement (EI) with noise.
\end{abstract}
\begin{keywords}
   Bayesian optimization, Gaussian process, upper confidence bound, Thompson sampling, Gaussian noise, cumulative regret
\end{keywords}

\section{Introduction}
Bayesian optimization (BO) is a powerful optimization method for black-box functions~\cite{lizotte2008,jones2001taxonomy,bosurvey2023}.
A Gaussian process (GP) surrogate model is used to approximate the objective function based on iterative sampling of the data~\cite{frazier2018}.  
BO has seen enormous success in many applications such as machine learning~\cite{wu2019hyperparameter}, structural design~\cite{mathern2021}, robotics~\cite{calandra2016}, etc. 
In the classic form, BO aims to solve the optimization problem 
\begin{equation} \label{eqn:opt-prob}
 \centering
  \begin{aligned}
	  &\underset{\substack{x}\in C}{\text{maximize}} 
	  & & f(\xbm), \\
  \end{aligned}
\end{equation}
where $C\subset \Rbb^d$ represents the bound constraints on $x$, $f:\Rbb^d\to \Rbb$ is the objective function. In this paper, we consider a finite $d$.

Recent years have seen significant advances in the regret bound analysis of BO algorithms, particularly for upper confidence bound (UCB) and Thompson sampling (TS)~\cite{srinivas2009gaussian,chowdhury2017kernelized,vakili2021scalable}, which are two of the most popular acquisition functions that are simple to implement.
The theoretical analysis of the regret bound analysis is often considered either under the Bayesian setting, where $f$ is assumed to be a sample from a GP with a known kernel, or under the frequentist settings, where $f$ is assumed to be in the reproducing kernel Hilbert space (RKHS) of GP kernels.

In the seminal work of~\cite{srinivas2009gaussian}, an $\mathcal{O}(\sqrt{T\beta_T\gamma_T})$ regret bound is established for GP-UCB under both Bayesian and frequentist assumptions, where $\gamma_T$ is the maximum information gain and $\beta_T$ is an algorithmic parameter given $T$ samples.
However, the difference in $\beta_t,t=1,\dots,T$ required lead to an $\mathcal{O}(\sqrt{\gamma_T T}\log^{\frac{1}{2}}(T))$ and an $\mathcal{O}(\gamma_T\sqrt{T}\log^{\frac{3}{2}}(T))$ cumulative regret bounds for GP-UCB under Bayesian and frequentist assumptions, respectively.
Both $\beta_T$ and $\gamma_T$ play crucial roles in the rate of convergence for regret bound.
While $\beta_t$ is often chosen to guarantee the bounds on prediction error $f(\xbm)-\mu_t(\xbm)$ with high probabilities, where $\mu_t$ is the posterior mean of GP with $t$ samples, $\gamma_T$ is often dependent on the kernels of the GP.
Therefore, efforts to improve the regret bound for GP-UCB have explored on both $\beta_t$ and $\gamma_t$.

In~\cite{chowdhury2017kernelized}, the authors focused on the frequentist setting with sub-Gaussian noise and showed an $\mathcal{O}(\log^{\frac{3}{2}}(T))$ improvement in regret bound of GP-UCB by reducing $\beta_t$ and maintaining the same prediction error bound.      
Further,~\cite{chowdhury2017kernelized} showed an $\mathcal{O}(\gamma_T\sqrt{T}\log^{\frac{1}{2}}(T))$ of the regret bounds for GP-TS. Here, we adopt the finite-dimensional rate in~\cite{vakili2021information}.  
In~\cite{kandasamy2018parallelised}, under the Bayesian setting, GP-TS is shown to have the same order of cumulative regret as GP-UCB.

In~\cite{vakili2021information}, the authors proved a significantly tighter bound on $\gamma_T$ for Matérn kernels: $\mathcal{O}(T^{\frac{d}{2\nu+d}}\log^{\frac{2\nu}{2\nu+d}}(T))$, which are among the most widely used kernels. Their bound on $\gamma_T$ for squared exponential (SE) kernels are of the same order $\mathcal{O}(\log^{d+1}(T))$ as in~\cite{srinivas2009gaussian}. The authors applied their $\gamma_T$ results to both Bayesian and frequentist settings, and for both GP-UCB and GP-TS.
Variants of GP-UCB that could achieve tight regret bounds include~\cite{valko2013finite,janz2020bandit,mutny2018efficient,calandriello2019gaussian}.
In~\cite{scarlett2017lower}, lower bounds $\Omega(T^{\frac{1}{2}}\log^{\frac{d}{2}}(T))$ and $\Omega(T^{\frac{\nu+d}{2\nu+d}})$ are established for the expectation of cumulative regret for SE and Matérn kernels under the frequentist setting and Gaussian noise assumptions. Though the expectation setting is not precisely the same, these lower bounds are often used to compare with the upper regret bounds and assess the derived upper bounds.

In the literature mentioned above, the noise assumptions are not always consistent. 
The noise assumptions in the frequentist setting include zero mean uniformly bounded noise~\cite{srinivas2009gaussian}, sub-Gaussian noise~\cite{chowdhury2017kernelized}, etc.
On the other hand, the noise under Bayesian setting is often the Gaussian noise~\cite{srinivas2009gaussian,kandasamy2018parallelised}.
Notably, adopting the sub-Gaussian noise instead of the uniformly bounded one in~\cite{chowdhury2017kernelized} contributed to a smaller $\beta_t$ and subsequently improved bounds on $R_t$.  
Additionally, under the frequentist setting, the prediction error between the objective and the posterior mean $f(\xbm)-\mu_{t-1}(\xbm)$ is often derived directly to include all points in $C$ and all sample number, \textit{i.e.}, $\forall \xbm\in C, \forall t\in\Nbb$.
Hence, $f(\xbm)-\mu_{t-1}(\xbm)$ is bounded via an increasing parameter sequence $\{\beta_t\}$ and the posterior standard deviation $\sigma_{t-1}(\xbm)$, \textit{i.e.},
\begin{equation} \label{eqn:bound-1}
  \centering
  \begin{aligned}
          |f(\xbm) - \mu_{t-1}(\xbm)| \leq \beta_t^{1/2} \sigma_{t-1}(\xbm),
  \end{aligned}
\end{equation}
with high probability, where $\beta_t\to\infty$ as $t\to \infty$.
However, under the Bayesian setting with independent and identically distributed (i.i.d.) Gaussian noise, one can obtain a pointwise prediction error bound at given $\xbm\in C$ and $t\in\Nbb$, \textit{i.e.},
\begin{equation} \label{eqn:bound-bayesian}
  \centering
  \begin{aligned}
          |f(\xbm) - \mu_{t-1}(\xbm)| \leq \beta^{1/2} \sigma_{t-1}(\xbm),
  \end{aligned}
\end{equation}
with probability $1-\delta$, where $\beta$ is dependent on $\delta$.
Then,~\eqref{eqn:bound-bayesian} is lifted to the cumulative case for $\forall \xbm\in \Cbb$ and $\forall t\in \Nbb$, where $\Cbb$ is a discrete approximation of $C$, as presented in~\cite{srinivas2009gaussian}.
The lack of pointwise prediction error bound in the frequentist setting not only hinders the asymptotic analysis, as $\beta_t$ is unbounded as $t\to\infty$,  but also prevents the superior cumulative regret bound achieved in the Bayesian setting from being applied.

Our contribution in this paper can be summarized as follows. 
Firstly, we adopt the same i.i.d Gaussian noise assumption as in~\cite{srinivas2009gaussian,kandasamy2018parallelised,scarlett2017lower} and prove the pointwise prediction error bound but in the frequentist setting. 
%This general result for GP can be applied to any algorithm and analysis other than cumulative regret. 
%For example, the asymptotic convergence of EI in~\cite{bull2011convergence} can now be extended to the noisy case in the frequentist setting.  
Second, we prove improved cumulative regret bound of GP-UCB in the frequentist setting for two of the most commonly used kernels: the squared exponential (SE) kernel and Matérn kernel. For a fair comparison, we apply the same rate for $\gamma_T$ from~\cite{vakili2021information} to all selected literature and present the results in Table~\ref{tbl:regretbound}. To the best of our knowledge, our bounds improve upon the state-of-the-art and is only $\log(T)$ worse than the lower bounds established for the expectation of regret in~\cite{scarlett2017lower}.
We also close the current regret bound gap for GP-UCB in the Bayesian and frequentist settings.
%%{\textcolor{red}{what does this tell and why important to have this?}}.
Third, we prove improved cumulative regret upper bound of GP-TS for SE and Matérn kernels. Basically, we aim to complement the current theoretical results on cumulative regret bounds. All results are summarized in Table 1 for easy illustration and comparison.

\begin{table}[H]
\centering
\resizebox{\textwidth}{!}{
\begin{tabular}{|c|c|c|c|c|}
  \hline 
   & GP-UCB, SE &GP-UCB, Matérn & GP-TS, SE & GP-TS, Matérn  \\
 \hline
  \makecell{$R_T^1$~\cite{srinivas2009gaussian}}   & $\mathcal{O}(\sqrt{T}\log^{\frac{2d+5}{2}}(T) )$  & $\mathcal{O}(T^\frac{\nu+3d/2}{2\nu+d}\log^{\frac{10\nu+3d}{4\nu+2d}}(T))$ & -  & - \\
 \makecell{$R_T^2$~\cite{chowdhury2017kernelized}}  & $\mathcal{O}(\sqrt{T}\log^{1+d}(T) )$   &$\mathcal{O}(T^\frac{\nu+3d/2}{2\nu+d}\log^{\frac{2\nu}{2\nu+d}}(T))$ &$\mathcal{O}(\sqrt{T}\log^{d+\frac{3}{2}}(T))$ &$\mathcal{O}(T^\frac{\nu+3d/2}{2\nu+d}\log^{\frac{6\nu+2d}{4\nu+2d}}(T))$ \\
 \makecell{$R_T^{3}$~\cite{vakili2021information}}       &$\mathcal{O}(\sqrt{T}\log^{\frac{d+5}{2}}(T) )$   &$\mathcal{O}(T^{\frac{\nu+d}{2\nu+d}}\log^{\frac{10\nu+4d}{4\nu+2d}}(T))$ &same as~\cite{chowdhury2017kernelized} & same as~\cite{chowdhury2017kernelized} \\ 
  \makecell{$R_T$ {\color{red}(Ours)} } &$\mathcal{O}(\sqrt{T}\log^{\frac{d+2}{2}}(T))$ &$\mathcal{O}(T^{\frac{\nu+d}{2\nu+d}}\log^{\frac{4\nu+d}{4\nu+2d}}(T))$ & $\mathcal{O}(\sqrt{T}\log^{\frac{d+3}{2}}(T))$ &$\mathcal{O}(T^{\frac{\nu+d}{2\nu+d}}\log^{\frac{3\nu+d}{2\nu+d}}(T) )$\\
  \hline
  \makecell{$\Ebb[R_T]$~\cite{scarlett2017lower}} & $\Omega(\sqrt{T}\log^{\frac{d}{2}}(T))$   & $\Omega(T^{\frac{\nu+d}{2\nu+d}})$ & - & - \\
\hline
\end{tabular}
}
\caption{\em Cumulative regret bounds in the frequentist setting using $\gamma_T$ from~\cite{vakili2021information}. That is, $\gamma_T=\mathcal{O}(\log^{d+1}(T))$ for SE kernel and $\gamma_T=\mathcal{O}(T^{\frac{d}{2\nu+d}}\log^{\frac{2\nu}{2\nu+d}}(T))$ for Matérn kernel. For $R_T^{3}$, the GP-UCB results are obtained by SupKernelUCB~\cite{valko2013finite}, not the basic UCB. 
The GP-TS regret bound for $R_T^{3}$ is directly taken from~\cite{chowdhury2017kernelized} and therefore the same.
The last row are the results reported in this paper, and are clearly improved. Our results are also closer to the lower bound established in \cite{scarlett2017lower}}
 
\label{tbl:regretbound}
\end{table}

This paper is organized as follows. In section~\ref{se:bo}, we describe the GP-UCB and GP-TS algorithms. In section~\ref{se:analysis}, we present the pointwise prediction error bound under the frequentist assumptions and Gaussian noise. Then, we prove its extension to a discrete set $\Cbb$ and any sample number $\forall t\in\Nbb$.
In section~\ref{se:regret}, we establish improved cumulative regret bounds for GP-UCB, which now have the same convergence rates as the Bayesian setting.
In section~\ref{se:thompson}, we prove improved cumulative regret bound for GP-TS.
%Numerical experiments to confirm the theoretical bounds are presented in section~\ref{se:examples}.
Conclusions are offered in section~\ref{se:conclusion}.

\section{Background}\label{se:bo}
In this section, we present background on BO, GP-UCB, GP-TS and cumulative regret.
A GP surrogate model takes the prior distribution to be multivariate Gaussian distribution, based on the existing samples. 
BO with GP surrogate models often uses an acquisition function to determine the next sample, which balances the trade-off between exploration and exploitation.  

\subsection{Gaussian process}\label{se:gp}
Consider a zero mean GP with the kernel (\textit{i.e.}, covariance function) $k(\xbm,\xbm'):\Rbb^d\times\Rbb^d\to\Rbb$, denoted as $GP(0,k(\xbm,\xbm'))$. 
We make the standard assumption in literature~\cite{srinivas2009gaussian} that $k(\xbm,\xbm')\leq 1$ and $k(\xbm,\xbm)=1$, $\forall \xbm,\xbm'\in C$ for $f$.
Choices of the kernels include the SE function, Matérn functions, etc.
At $\xbm_t\in C$, we admit observation noise $\epsilon_t$ for $f$ so that the observed function value is 
$y_t = f(\xbm_t)+\epsilon_t$.  
Thus, the prior distribution on 
$\xbm_{1:t}=[\xbm_1,\dots,\xbm_t]^T$, $\fbm_{1:t}=[f(\xbm_1),\dots,f(\xbm_t)]^T$ and $\ybm_{1:t}=[y_1,\dots,y_t]^T$
is $\fbm_{1:t}\sim\mathcal{N} (\mathbf{0}, \Kbm_t)$, 
where $\mathcal{N}$ denotes the normal distribution, 
and $\Kbm_t= [k(\xbm_1,\xbm_1),\dots,k(\xbm_1,\xbm_t); \dots; k(\xbm_t,\xbm_1),\dots,k(\xbm_t,\xbm_t)]$. 
Without losing generality, we assume the covariance matrix $\Kbm_t$ is symmetric positive semi-definite.

We assume that the noise follow an independent zero mean Gaussian with variance $\sigma_{\epsilon}^2$, \textit{i.e.}, $\epsilon_t\sim\mathcal{N}(0,\sigma_{\epsilon}^2)$. The BO algorithm often does not have knowledge of $\sigma_{\epsilon}$ and instead assumes a standard deviation of $\sigma$.
In section~\ref{se:analysis}, we allow $\sigma$ and $\sigma_{\epsilon}$ to be different.
It is possible for $\epsilon_t$ to be relaxed to have a more general assumption (see Remark~\ref{remark:generalnoise} for examples). 
Denote $\epsbold_{1:t}=[\epsilon_1,\dots,\epsilon_t]^T$.
The posterior distribution of  $f(\xbm) | \xbm_{1:t},\ybm_{1:t} \sim \mathcal{N} (\mu(\xbm),\sigma^2(\xbm))$ can  be inferred using Bayes' rule. 
The posterior mean $\mu$ and variance  $\sigma^2$ are  
\begin{equation} \label{eqn:GP-post}
 \centering
  \begin{aligned}
  &\mu_t(\xbm)\ =\ \kbm_t(\xbm) \left(\Kbm_t+ \sigma^2 \Ibm\right)^{-1} \ybm_{1:t} \\
  &\sigma^2_t(\xbm)\ =\
k(\xbm,\xbm)-\kbm_t(\xbm)^T \left(\Kbm_t+\sigma^2 \Ibm\right)^{-1}\kbm_t(\xbm)\ ,
\end{aligned}
\end{equation}
where $\kbm_t(\xbm)= [k(\xbm_1,\xbm),\dots,k(\xbm_t,\xbm)]^T$. 
\subsection{Upper confidence bound}\label{se:ucb}
The UCB acquisition function with $t$ samples is defined by  
\begin{equation} \label{eqn:UCB}
 \centering
  \begin{aligned}
       UCB_{t}(\xbm) =  \mu_{t}(\xbm)+\beta^{1/2}_{t+1}\sigma_{t}(\xbm), 
  \end{aligned}
\end{equation}
where $\beta_{t+1}\geq 0$ is an algorithmic parameter that blances the trade-off between exploration and exploitation and plays an important role in theoretical regret analysis. 
Indeed, $\beta_t$ is often chosen based on cumulative regret bounds~\cite{srinivas2009gaussian}. 

The GP-UCB algorithm chooses its next sample via
\begin{equation} \label{eqn:acquisition-ucb}
 \centering
  \begin{aligned}
      \xbm_{t+1} = \underset{\substack{\xbm\in C}}{\text{argmax}}  UCB_t (\xbm).
  \end{aligned}
\end{equation}
In order to solve~\eqref{eqn:acquisition-ucb}, optimization algorithms including L-BFGS or random 
search can be used.
The GP-UCB algorithm is given in Algorithm~\ref{alg:boucb}, where a stopping criterion can be added.
\begin{algorithm}[H]
 \caption{GP-UCB}\label{alg:boucb}
  \begin{algorithmic}[1]
	  \STATE{Choose $\mu_0=0$, $k(\cdot,\cdot)$, $\alpha$, and $T_0$ initial samples $\xbm_i, i=0,\dots,T_0$. Observe $y_i$.} 
	  \STATE{Train the Gaussian process surrogate model for $f$ on the initial samples.}
  \FOR{$t=T_0,T_0+1,\dots$}
	  \STATE{Find $\xbm_{t+1}$ based on~\eqref{eqn:acquisition-ucb}.}
	  \STATE{Observe $y_{t+1}$. \;}
	  \STATE{Train the surrogate model with the addition of $\xbm_{t+1}$ and $y_{t+1}$.\;}
	  \IF {Stopping criterion satisfied} 
              \STATE{Exit}
	  \ENDIF
  \ENDFOR
  \end{algorithmic}
\end{algorithm}

\subsection{Thompson Sampling}\label{se:tsalg}
Another widely adopted acqusition function is Thompson sampling~\cite{agrawal2013thompson}. In this paper, we adopt the design in~\cite{chowdhury2017kernelized} where a scaling parameter $\nu_t$ and a discretization $\Cbb_t$ are employed to find the next sample.
Specifically, at each iteration, GP-TS samples a function $f_t$ from the GP with mean $\mu_{t-1}(\cdot)$ 
and covariance of $\nu_t k_{t-1}(\cdot,\cdot)$, where the scaling parameter $\nu_t$ is highly correlated to $\beta_t$ in the GP-UCB algorithm. The discretization $\Cbb_t$ used in the analysis of CP-UCB is incorporated into the algorithm as a decision set and the next sample is chosen from within $\Cbb_t$. 
The prior of this GP-TS is nonparameteric, as stated in~\cite{chowdhury2017kernelized}.
Specifically, for consistency in comparison to~\cite{chowdhury2017kernelized}, $\nu_t$ is chosen to be 
 \begin{equation} \label{def:nut}
  \centering
  \begin{aligned}
   \nu_t^{1/2} = B+ \sqrt{2\log(\frac{\pi^2 t^2}{3}/\delta) + 2d \log( 1+rL t^2)}. 
   \end{aligned}
 \end{equation} 
That is, $\nu_t$ satisfies Lemma~\ref{lem:f-discret-bound-RKHS} in the analysis for GP-UCB with $\delta/2$ and $\pi_t=\frac{\pi^2t^2}{6}$.
The GP-TS algorithm is given in Algorithm~\ref{alg:bots}.
\begin{algorithm}[H]
 \caption{GP-TS}\label{alg:bots}
  \begin{algorithmic}[1]
	  \STATE{Choose $\mu_0=0$, $k(\cdot,\cdot)$, $B$, $\sigma$, and $T_0$ initial samples $\xbm_i, i=0,\dots,T_0$. Observe $y_i$.} 
	  \STATE{Train the Gaussian process surrogate model for $f$ on the initial samples.}
  \FOR{$t=T_0,T_0+1,\dots$}
          \STATE{Set $\nu_t$ according to~\eqref{def:nut}.}
          \STATE{Choose the discretization $\Cbb_t$ based on~\eqref{eqn:disc-size-1}.}
          \STATE{Sample $f_t(\cdot)$ from $GP(\mu_{t-1}(\cdot), \nu_t k_{t-1}(\cdot,\cdot))$}
	  \STATE{Find $\xbm_{t+1} = \underset{\substack{\xbm\in \Cbb_t}}{\text{argmax}}  f_t (\xbm)$.}
	  \STATE{Observe $y_{t+1}$. \;}
	  \STATE{Train the surrogate model with the addition of $\xbm_{t+1}$ and $y_{t+1}$.\;}
	  \IF {Stopping criterion satisfied} 
              \STATE{Exit}
	  \ENDIF
  \ENDFOR
  \end{algorithmic}
\end{algorithm}
For simplicity of analysis, we assume in the case of GP-TS, $\sigma_{\epsilon}\leq \sigma$ in section~\ref{se:thompson}.

\subsection{Cumulative regret}\label{se:other}
In BO literature, cumulative regret $R_T$, $T$ being the number of samples, is defined as $R_T = \sum_{t=1}^T r_t$, where $r_t$ is the instantaneous regret. 
For GP-UCB and GP-TS, $r_t$ is defined by $r_t = f(\xbm_t)-f(\xbm^*)$.
An algorithm with no regret refers to $\lim_{T\to\infty} R_T/T=0$. 
It is well-known that GP-UCB and GP-TS are no-regret algorithms.
The regret bounds are often proven via information theory results that are well-established in literature.
The maximum information gain is defined below.
\begin{definition}\label{def:infogain}
  Given $\xbm_{1:t}=[\xbm_1,\dots,\xbm_t]$ and $A_t:=\ybm_{1:t}=[\ybm_1,\dots,\ybm_t]$, the mutual information between $f$ and $\ybm_{1:t}$ is $I(\ybm_A;f)=H(\ybm_A)-H(\ybm_A|f)$, where $H$ is the entropy. The maximum information gain $\gamma_T$ after $T$ samples is $\gamma_T = \max_{A\subset C,|A|=T} I(\ybm_A;\fbm_A)$. 
\end{definition}
The properties of union bound, or Boole's inequality, is used often in the analysis. For a countable set of events $A_1,A_2,\cdots$, we have 
\begin{equation} \label{def:union}
 \centering
  \begin{aligned}
    P(\bigcup_{i=1}^{\infty} A_i) \leq \sum_{i=1}^{\infty} P(A_i).
  \end{aligned}
\end{equation}

\subsection{Basic assumptions}
For ease of reference, we list the assumptions used in our paper below, all of which are common in BO literature.
\begin{assumption}\label{assp:rkhs}
The objective function $f$ is in the RKHS $\mathcal{H}_k$ associated with the kernel $k(\xbm,\xbm')\leq 1$ and $k(\xbm,\xbm)=1$. Further, the RKHS norm of $f$ is bounded $\norm{f}_{\mathcal{H}_k}\leq B$ for some $B\geq 1$. 
\end{assumption}
\begin{assumption}\label{assp:lipschitz}
The objective function $f$ is Lipschitz continuous with Lipschitz constant $L$ for 2-norm $\norm{\cdot}$. 
\end{assumption}
Since our results are given with probabilities, Assumption~\ref{assp:lipschitz} can be further generalized to $f$ being Lipschitz with a high probability, possibly decaying with $L$.
The discrete set assumption on $C$ is given below, used in an intermediate step in cumulative regret analysis. 
\begin{assumption}\label{assp:discrete}
  The set $C$ is discrete and has a finite cardinality $|C|$.
\end{assumption}
Without losing generality, the general bound constraint assumption is given below.
\begin{assumption}\label{assp:constraint}
   The set $C\subseteq [0,r]^d$ is convex and compact for $r>0$.
\end{assumption}
Finally, the Gaussian noise assumption is given below.
\begin{assumption}\label{assp:gaussiannoise}
  The observation noise are i.i.d. random samples from a zero mean Gaussian, \textit{i.e.}, $\epsilon_t\sim\mathcal{N}(0,\sigma_{\epsilon}^2),\sigma_{\epsilon}>0$ for all $t$.
\end{assumption}

\section{Prediction error bound with Gaussian noise}\label{se:analysis}
In this section, we present our result on the prediction error $f(\xbm)-\mu_t(\xbm)$. In general, we focus on $\sigma_t(\xbm) >0$ as the results for $\sigma_t(\xbm)=0$ are trivial.
First, we state some obvious properties of $\phi$, $\Phi$ and $\tau$.
\begin{lemma}\label{lem:phi}
The density functions satisfy $0< \phi(x)\leq \phi(0)<0.4, \Phi(x)\in(0,1)$, 
for any $x\in\Rbb$. 
 Given a random variable $\xi$ that follows the standard normal distribution: $\xi\sim\mathcal{N}(0,1)$. Then, the probability of $r>c, c>0$ satisfies $P\{r>c\}\leq \frac{1}{2}e^{-c^2/2}$. 
 Similarly, for $c<0$, $P\{r<c\}\leq \frac{1}{2}e^{-c^2/2}$. 
\end{lemma}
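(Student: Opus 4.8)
The plan is to treat the three claims separately, since each rests on an elementary property of the standard normal density $\phi(x)=\frac{1}{\sqrt{2\pi}}e^{-x^2/2}$ and its cumulative distribution $\Phi$. For the first claim I would observe that $e^{-x^2/2}>0$ for every $x\in\Rbb$, giving $\phi(x)>0$, while $x^2\geq 0$ forces $e^{-x^2/2}\leq e^{0}=1$ with equality only at $x=0$, so $\phi(x)\leq\phi(0)=\frac{1}{\sqrt{2\pi}}$; the numerical estimate $\frac{1}{\sqrt{2\pi}}=0.3989\ldots<0.4$ then closes the bound on $\phi(0)$. For $\Phi$, since $\phi$ is strictly positive and integrates to $1$ over $\Rbb$, the function $\Phi(x)=\int_{-\infty}^{x}\phi(t)\,dt$ is strictly increasing with $\lim_{x\to-\infty}\Phi(x)=0$ and $\lim_{x\to+\infty}\Phi(x)=1$, neither limit being attained at finite $x$; hence $\Phi(x)\in(0,1)$.

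The second claim is the one-sided Gaussian tail bound and is the only part requiring a small idea. I would write $P\{\xi>c\}=\int_{c}^{\infty}\phi(t)\,dt$ and substitute $t=c+s$ to factor out $e^{-c^2/2}$:
\begin{equation*}
  P\{\xi>c\}=e^{-c^2/2}\int_{0}^{\infty}\frac{1}{\sqrt{2\pi}}\,e^{-cs-s^2/2}\,ds.
\end{equation*}
Since $c>0$ and $s\geq 0$ give $e^{-cs}\leq 1$, the integral is bounded by $\int_{0}^{\infty}\frac{1}{\sqrt{2\pi}}e^{-s^2/2}\,ds=\frac{1}{2}$, which yields $P\{\xi>c\}\leq\frac{1}{2}e^{-c^2/2}$.

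The third claim then follows by symmetry: for $c<0$ the reflection $\xi\mapsto-\xi$ preserves $\mathcal{N}(0,1)$, so $P\{\xi<c\}=P\{-\xi>-c\}=P\{\xi>-c\}$ with $-c>0$, and applying the second claim at $-c$ gives $P\{\xi<c\}\leq\frac{1}{2}e^{-(-c)^2/2}=\frac{1}{2}e^{-c^2/2}$.

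The only genuine obstacle is recovering the factor $\frac{1}{2}$ in the tail bound. A plain Chernoff/Markov argument, $P\{\xi>c\}\leq e^{-\lambda c}\,\Ebb[e^{\lambda\xi}]$ optimized at $\lambda=c$, delivers only the weaker $e^{-c^2/2}$. The substitution $t=c+s$ followed by the crude estimate $e^{-cs}\leq 1$ is what sharpens this, exploiting that the residual Gaussian mass on the half-line $[0,\infty)$ is exactly one half; everything else is a direct consequence of positivity, monotonicity, and the symmetry of $\phi$.
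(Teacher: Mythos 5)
Your proof is correct and follows essentially the same route as the paper, which defers the tail bound to the standard argument of Lemma 5.1 in Srinivas et al.\ — namely the substitution $t=c+s$, factoring out $e^{-c^2/2}$, and bounding $e^{-cs}\leq 1$ so the remaining mass is $P\{\xi>0\}=\tfrac12$. The elementary claims about $\phi$, $\Phi$, and the symmetric case $c<0$ are handled exactly as the paper (implicitly) intends, so nothing further is needed.
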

The last statement in Lemma~\ref{lem:phi} is a well-known result for standard normal distribution (\textit{e.g.}, see proof of Lemma 5.1 in~\cite{srinivas2009gaussian}).

\subsection{Pointwise prediction error}\label{se:gaussian-noise}
We first note that the posterior mean, $\mu_t(\xbm)\ =\ \kbm_t(\xbm) \left(\Kbm_t+ \sigma^2 \Ibm\right)^{-1} \ybm_{1:t}$ ~\eqref{eqn:GP-post} can be viewed as a linear combination of the observations $\ybm_{1:t}$, which is the sum of function values and observed noises, \textit{i.e.}, $\ybm_{1:t} = \fbm_{1:t}+\epsbold_{1:t}$.  
Assumptions~\ref{assp:rkhs} and~\ref{assp:gaussiannoise} hold in this section.
Under Assumption~\ref{assp:rkhs}, the posterior mean generated by $\fbm_{1:t}$, i.e. $\kbm_t^T(\xbm)(\Kbm_t+\sigma^2 \Ibm)^{-1} \fbm_{1:t}$ has been shown to be bounded by the posterior standard deviation, as stated in the Lemma below.
\begin{lemma}\label{lem:rkhs-f-1}
Suppose Assumption~\ref{assp:rkhs} holds. For any given $\xbm\in C$ and $t\in\Nbb$,
\begin{equation} \label{eqn:rkhs-f-1}
  \centering
  \begin{aligned}
          |f(\xbm) - \kbm_t^T(\xbm)(\Kbm_t+\sigma^2 \Ibm)^{-1} \fbm_{1:t}| \leq B\sigma_t({\xbm}), 
  \end{aligned}
\end{equation}
  where $B$ is the norm bound in Assumption~\ref{assp:rkhs}. 
\end{lemma}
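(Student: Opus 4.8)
The plan is to exploit the reproducing property of $\mathcal{H}_k$ together with a single direct computation of an RKHS norm; notably, the statement is purely deterministic and holds for every fixed $\xbm$ and $t$, so no appeal to the noise distribution is needed. First I would introduce the weight vector $\alpha = (\Kbm_t + \sigma^2\Ibm)^{-1}\kbm_t(\xbm) \in \Rbb^t$ and define the residual function
\[
  h(\cdot) = k(\cdot,\xbm) - \sum_{i=1}^t \alpha_i\, k(\cdot,\xbm_i),
\]
which lies in $\mathcal{H}_k$ since it is a finite linear combination of kernel sections $k(\cdot,\xbm_i)$. By the reproducing property $\langle f, k(\cdot,\xbm)\rangle_{\mathcal{H}_k} = f(\xbm)$, evaluating $\langle f, h\rangle_{\mathcal{H}_k}$ yields precisely $f(\xbm) - \alpha^T\fbm_{1:t} = f(\xbm) - \kbm_t^T(\xbm)(\Kbm_t + \sigma^2\Ibm)^{-1}\fbm_{1:t}$, i.e. the quantity inside the absolute value on the left-hand side of~\eqref{eqn:rkhs-f-1}.

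Next I would apply the Cauchy--Schwarz inequality in $\mathcal{H}_k$, which gives $|\langle f, h\rangle_{\mathcal{H}_k}| \le \norm{f}_{\mathcal{H}_k}\norm{h}_{\mathcal{H}_k} \le B\norm{h}_{\mathcal{H}_k}$ by Assumption~\ref{assp:rkhs}. It then remains to show that $\norm{h}_{\mathcal{H}_k} \le \sigma_t(\xbm)$. Expanding $\norm{h}_{\mathcal{H}_k}^2 = \langle h, h\rangle_{\mathcal{H}_k}$ and reducing every inner product of kernel sections to a kernel evaluation produces
\[
  \norm{h}_{\mathcal{H}_k}^2 = k(\xbm,\xbm) - 2\alpha^T\kbm_t(\xbm) + \alpha^T\Kbm_t\alpha.
\]
Substituting the definition of $\alpha$ and writing $\Kbm_t = (\Kbm_t + \sigma^2\Ibm) - \sigma^2\Ibm$ inside the quadratic term, the expression collapses to $\norm{h}_{\mathcal{H}_k}^2 = \sigma_t^2(\xbm) - \sigma^2\,\kbm_t^T(\xbm)(\Kbm_t + \sigma^2\Ibm)^{-2}\kbm_t(\xbm)$, recalling that $\sigma_t^2(\xbm) = k(\xbm,\xbm) - \kbm_t^T(\xbm)(\Kbm_t + \sigma^2\Ibm)^{-1}\kbm_t(\xbm)$ by~\eqref{eqn:GP-post}. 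The subtracted term is nonnegative because $(\Kbm_t + \sigma^2\Ibm)^{-2}$ is positive semidefinite, so $\norm{h}_{\mathcal{H}_k}^2 \le \sigma_t^2(\xbm)$; combining this with the Cauchy--Schwarz bound yields~\eqref{eqn:rkhs-f-1}.

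I expect the only delicate point to be the bookkeeping in the norm computation, specifically the cancellation that turns $k(\xbm,\xbm) - 2\alpha^T\kbm_t(\xbm) + \alpha^T\Kbm_t\alpha$ into $\sigma_t^2(\xbm)$ minus a manifestly nonnegative regularization term. The decomposition $\Kbm_t = (\Kbm_t + \sigma^2\Ibm) - \sigma^2\Ibm$ is the crux: it isolates the contribution of the $\sigma^2 > 0$ ridge, which is exactly what makes $\norm{h}_{\mathcal{H}_k}^2 \le \sigma_t^2(\xbm)$ rather than an equality, and it is the mechanism that ties the deterministic (noise-free) approximation error $f(\xbm) - \kbm_t^T(\xbm)(\Kbm_t + \sigma^2\Ibm)^{-1}\fbm_{1:t}$ to the GP posterior standard deviation. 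Everything else is routine RKHS manipulation.
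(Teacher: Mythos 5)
Your proof is correct: introducing $\alpha=(\Kbm_t+\sigma^2\Ibm)^{-1}\kbm_t(\xbm)$, applying the reproducing property and Cauchy--Schwarz, and then using the decomposition $\Kbm_t=(\Kbm_t+\sigma^2\Ibm)-\sigma^2\Ibm$ to get $\norm{h}_{\mathcal{H}_k}^2=\sigma_t^2(\xbm)-\sigma^2\,\kbm_t^T(\xbm)(\Kbm_t+\sigma^2\Ibm)^{-2}\kbm_t(\xbm)\leq\sigma_t^2(\xbm)$ is exactly the standard argument. The paper does not prove this lemma itself but defers to its cited references, and your derivation is essentially the same as the proof found there, so nothing is missing.
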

The proof of Lemma~\ref{lem:rkhs-f-1} can be found in~\cite{chowdhury2017kernelized,bull2011convergence,wang2023regret}.

To obtain a bound on the prediction error $f(\xbm)-\mu_t({\xbm})$, we focus on the posterior mean induced by the noise, denoted as
\begin{equation} \label{eqn:rkhs-e-1}
  \centering
  \begin{aligned}
  e_t=\kbm_t(\xbm)^T (\Kbm_t+\sigma^2\Ibm)^{-1}\epsbold_{1:t},
  \end{aligned}
\end{equation}
and its relationship with the posterior variance~\eqref{eqn:GP-post}.
Define
\begin{equation} \label{eqn:rkhs-h}
  \centering
  \begin{aligned}
   \hbm_t(\xbm)=(\Kbm_t+\sigma^2\Ibm)^{-1}\kbm_t(\xbm). 
  \end{aligned}
\end{equation}
We can view~\eqref{eqn:rkhs-e-1} as a linear combination of $\epsbold_{1:t}$, and GP itself as a linear smoother~\cite{williams2006gaussian}. Our goal is to bound $e_t$ with a finite constant multiplying $\sigma_{t}(\xbm)$.

\begin{theorem}\label{thm:rkhs-f-bound-finite}
  Given $\delta\in(0,1)$, let $\beta^{1/2}= B+\frac{\sigma_{\epsilon}}{\sigma}\sqrt{2\log(1/\delta)}$.
  Then, with probability $\geq 1-\delta$, 
\begin{equation} \label{eqn:rkhs-f-bound-gaussian-1}
  \centering
  \begin{aligned}
          |f(\xbm) - \mu_t(\xbm)| \leq  \beta^{1/2}\sigma_t(\xbm),
  \end{aligned}
\end{equation}
   at given $\xbm\in C$ and $t\in\Nbb$.
\end{theorem}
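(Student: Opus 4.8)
The plan is to split the prediction error into a deterministic ``signal'' part and a random ``noise'' part, control each separately, and recombine with the triangle inequality. Starting from $\mu_t(\xbm)=\kbm_t^T(\xbm)(\Kbm_t+\sigma^2\Ibm)^{-1}\ybm_{1:t}$ and using $\ybm_{1:t}=\fbm_{1:t}+\epsbold_{1:t}$, I would write
\[
 f(\xbm)-\mu_t(\xbm)=\Big(f(\xbm)-\kbm_t^T(\xbm)(\Kbm_t+\sigma^2\Ibm)^{-1}\fbm_{1:t}\Big)-e_t,
\]
where $e_t=\hbm_t(\xbm)^T\epsbold_{1:t}$ is the noise-induced posterior mean from~\eqref{eqn:rkhs-e-1}. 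By Lemma~\ref{lem:rkhs-f-1} the first bracket is bounded \emph{deterministically} by $B\sigma_t(\xbm)$, so the entire probabilistic content reduces to a high-probability bound on $|e_t|$.

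For the noise term, I would note that for a \emph{fixed} $\xbm$ and $t$ the weight vector $\hbm_t(\xbm)$ is deterministic, so $e_t$ is a fixed linear functional of the Gaussian vector $\epsbold_{1:t}\sim\mathcal{N}(\zerobold,\sigma_\epsilon^2\Ibm)$ (Assumption~\ref{assp:gaussiannoise}). Hence $e_t\sim\mathcal{N}(0,\sigma_\epsilon^2\norm{\hbm_t(\xbm)}^2)$, and normalizing gives $e_t/(\sigma_\epsilon\norm{\hbm_t(\xbm)})\sim\mathcal{N}(0,1)$. Applying the standard-normal tail estimate of Lemma~\ref{lem:phi} to both tails yields, with probability $\geq 1-\delta$,
\[
 |e_t|\ \leq\ \sigma_\epsilon\,\norm{\hbm_t(\xbm)}\,\sqrt{2\log(1/\delta)},
\]
since each tail contributes at most $\tfrac12 e^{-c^2/2}$ and the choice $c=\sqrt{2\log(1/\delta)}$ makes the total $\delta$.

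The crux — and what I expect to be the main obstacle — is converting $\sigma_\epsilon\norm{\hbm_t(\xbm)}$ into $\tfrac{\sigma_\epsilon}{\sigma}\sigma_t(\xbm)$, i.e. proving the deterministic inequality $\sigma^2\norm{\hbm_t(\xbm)}^2\leq \sigma_t^2(\xbm)$. Writing $M=\Kbm_t+\sigma^2\Ibm$, this is equivalent to the matrix inequality $\kbm_t^T(M^{-1}+\sigma^2 M^{-2})\kbm_t\leq k(\xbm,\xbm)$. I would prove it spectrally: diagonalize $\Kbm_t=U\Lambda U^T$ with eigenvalues $\lambda_i\geq 0$, set $\vbm=U^T\kbm_t$, and check the elementary per-eigenvalue bound $\frac{\lambda_i+2\sigma^2}{(\lambda_i+\sigma^2)^2}\leq\frac{1}{\lambda_i}$ for $\lambda_i>0$, which rearranges to $0\leq\sigma^4$ and therefore always holds. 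Summing over $i$ gives $\kbm_t^T(M^{-1}+\sigma^2 M^{-2})\kbm_t\leq \sum_i v_i^2/\lambda_i=\kbm_t^T\Kbm_t^{\dagger}\kbm_t$. Finally I would invoke positive semidefiniteness of the augmented kernel matrix on $\{\xbm_1,\dots,\xbm_t,\xbm\}$: this forces $\kbm_t\in\mathrm{range}(\Kbm_t)$ (so $v_i=0$ whenever $\lambda_i=0$, making the sum well defined) and, via its Schur complement, yields $\kbm_t^T\Kbm_t^{\dagger}\kbm_t\leq k(\xbm,\xbm)=1$. This closes the bound $\sigma^2\norm{\hbm_t(\xbm)}^2\leq\sigma_t^2(\xbm)$.

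Combining the three pieces, with probability $\geq 1-\delta$ one has $|e_t|\leq \tfrac{\sigma_\epsilon}{\sigma}\sqrt{2\log(1/\delta)}\,\sigma_t(\xbm)$, and adding the deterministic $B\sigma_t(\xbm)$ from Lemma~\ref{lem:rkhs-f-1} gives
\[
 |f(\xbm)-\mu_t(\xbm)|\ \leq\ \Big(B+\tfrac{\sigma_\epsilon}{\sigma}\sqrt{2\log(1/\delta)}\Big)\sigma_t(\xbm)\ =\ \beta^{1/2}\sigma_t(\xbm),
\]
which is exactly~\eqref{eqn:rkhs-f-bound-gaussian-1}. I would remark that the whole randomness enters only through the single Gaussian scalar $e_t$, which is why the bound is genuinely pointwise in $(\xbm,t)$ and $\beta$ stays independent of $t$, in contrast to the growing $\beta_t$ of~\eqref{eqn:bound-1}.
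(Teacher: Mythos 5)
Your proposal is correct and follows essentially the same route as the paper's proof: the same decomposition into the RKHS term (bounded by Lemma~\ref{lem:rkhs-f-1}) plus the Gaussian scalar $e_t$, the same two-sided tail bound from Lemma~\ref{lem:phi}, and the same key deterministic inequality $\sigma\norm{\hbm_t(\xbm)}\leq\sigma_t(\xbm)$ proved by a per-eigenvalue comparison (your bound $\frac{\lambda_i+2\sigma^2}{(\lambda_i+\sigma^2)^2}\leq\frac{1}{\lambda_i}$ is exactly the reciprocal of the paper's $(D_t^{K3})_i>\lambda_i$ in~\eqref{eqn:rkhs-e-pf-9}) combined with positive semidefiniteness of the augmented kernel matrix via a Schur complement. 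The only cosmetic difference is that you route the final step through $\Kbm_t^{\dagger}$ and the range condition $\kbm_t\in\mathrm{range}(\Kbm_t)$, which handles the zero-eigenvalue case explicitly, whereas the paper substitutes the dominating matrix directly into the augmented PSD form in~\eqref{eqn:rkhs-e-pf-12}--\eqref{eqn:rkhs-e-pf-14}.
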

\begin{proof}
   We first prove that for a finite $\sigma>0$ and $t\in\Nbb$, 
  $\norm{\hbm_t(\xbm)}\leq \frac{1}{\sigma} \sigma_{t}(\xbm)$.
   We shall repeatedly use the fact that the covariance matrix $\Kbm_t$ is symmetric and positive semi-definite. Specifically, given $t$ samples and $\xbm\in C$, the covariance matrix of $\fbm_{1:t}$ and $f(\xbm)$  with zero noise 
 \begin{equation} \label{eqn:rkhs-e-pf-1}
  \centering
  \begin{aligned}
    \begin{bmatrix*}
               \Kbm_t &\kbm_t\\
               \kbm_t^T &k(\xbm,\xbm)
      \end{bmatrix*}  
  \end{aligned}
\end{equation}
   is symmetric and positive semi-definite.  On the other hand, the covariance matrix with a Gaussian noise (assumed) $\mathcal{N}(0,\sigma^2)$ leads to  
 \begin{equation} \label{eqn:rkhs-e-pf-2}
  \centering
  \begin{aligned}
    \begin{bmatrix}
               \Kbm_t +\sigma^2\Ibm & \kbm_t\\
               \kbm_t^T & k(\xbm,\xbm)
      \end{bmatrix},  
  \end{aligned}
\end{equation}
  which is positive definite for $\forall \sigma>0$. Since $\Kbm_t$ is symmetric and positive semi-definite, we have the spectral decomposition~\cite{demmel1997applied} 
 \begin{equation} \label{eqn:rkhs-e-pf-3}
  \centering
  \begin{aligned}
               \Kbm_t = \Qbm^T_t \Dbm_t^K \Qbm_t, 
  \end{aligned}
\end{equation}
  where $\Qbm_t$ is an orthogonal $t\times t$ matrix and $\Dbm_t^K$ is a diagonal matrix whose entries are the eigenvalues of $\Kbm_t$. 
  For simplicity, denote the eigenvalues of $\Kbm_t$ as $\lambda_i,i=1\dots,t$. 
  Without losing genearlity, let $(\Dbm_t^K)_i = \lambda_i \geq 0$.

  Using the spetral decomposition~\eqref{eqn:rkhs-e-pf-3}, we have
 \begin{equation} \label{eqn:rkhs-e-pf-4}
  \centering
  \begin{aligned}
   \Kbm_t+\sigma^2\Ibm= \Qbm^T_t \Dbm_t^{K1} \Qbm_t,  \ (\Kbm_t+\sigma^2\Ibm)^{-1}= \Qbm^T_t (\Dbm_t^{K1})^{-1} \Qbm_t, 
  \end{aligned}
  \end{equation}
where 
 \begin{equation} \label{eqn:rkhs-e-pf-5}
  \centering
  \begin{aligned}
    \Dbm_t^{K1} =  (\Dbm_t^K+\sigma^2\Ibm), (\Dbm_t^{K1})_i = \lambda_i +\sigma^2,
  \end{aligned}
  \end{equation} 
  for $ i=1,\dots,t$. 
  Next, consider the matrix $(\sigma^2 (\Kbm_t+\sigma^2 I)^{-1}+\Ibm)^{-1}(\Kbm_t+\sigma^2 I)$.
Applying~\eqref{eqn:rkhs-e-pf-4}, we can write 
 \begin{equation} \label{eqn:rkhs-e-pf-6}
  \centering
  \begin{aligned}
(\sigma^2 (\Kbm_t+&\sigma^2 I)^{-1}+\Ibm)^{-1}(\Kbm_t+\sigma^2 I)= (\sigma^2 \Qbm_t^T (\Dbm_t^{K1})^{-1}\Qbm_t +\Ibm)^{-1} \Qbm_t^T\Dbm^{K1}_t\Qbm_t\\
       =& (\Qbm^T_t \Dbm^{K2}_t \Qbm_t)^{-1} \Qbm^T_t\Dbm^{K1}_t\Qbm_t = \Qbm^T_t (\Dbm^{K2}_t)^{-1}\Dbm^{K1}_t\Qbm_t =\Qbm^T_t \Dbm^{K3}_t\Qbm_t, 
  \end{aligned}
\end{equation}
  where 
 \begin{equation} \label{eqn:rkhs-e-pf-7}
  \centering
  \begin{aligned}
        \Dbm^{K2}_t = \sigma^2(\Dbm_t^{K1})^{-1}+\Ibm, \Dbm_t^{K3} = (\Dbm_t^{K2})^{-1}\Dbm_t^{K1}.
  \end{aligned}
\end{equation}
  Thus, $(\sigma^2 (\Kbm_t+\sigma^2 I)^{-1}+\Ibm)^{-1}(\Kbm_t+\sigma^2 I)$ is symmetric and positive definite.
   The components of the $\Dbm_t^{K2}$ and $\Dbm_t^{K3}$ are
 \begin{equation} \label{eqn:rkhs-e-pf-8}
  \centering
  \begin{aligned}
   (D_t^{K2})_i = \frac{\sigma^2}{\lambda_i+\sigma^2}+1, (D_t^{K3})_i = \frac{(\lambda_i+\sigma^2)^2}{\lambda_i+2\sigma^2}.
  \end{aligned}
\end{equation}
Thus,  
 \begin{equation} \label{eqn:rkhs-e-pf-9}
  \centering
  \begin{aligned}
  (D_t^{K3})_i = \frac{ \lambda_i^2+2\lambda_i\sigma^2+\sigma^4}{\lambda_i+2\sigma^2}=\lambda_i+\frac{\sigma^4}{\lambda_i+2\sigma^2}> \lambda_i.
  \end{aligned}
\end{equation}
 From~\eqref{eqn:rkhs-e-pf-6} and~\eqref{eqn:rkhs-e-pf-9},  for vector $\forall \vbm \in\Rbb^t$ and $\vbm\neq \zerobold$
 \begin{equation} \label{eqn:rkhs-e-pf-10}
  \centering
  \begin{aligned}
          \vbm^T \Kbm_t \vbm < \vbm^T (\sigma^2 (\Kbm_t+\sigma^2 I)^{-1}+\Ibm)^{-1}(\Kbm_t+\sigma^2 I)\vbm.
  \end{aligned}
\end{equation}
  Since~\eqref{eqn:rkhs-e-pf-1} is symmetric and positive semi-definite, we can construct vector $[\vbm,u]\in \Rbb^{t+1}$, for $\forall \vbm\in\Rbb^t$, $u\in\Rbb$ and $u\vbm\neq \zerobold$. Then, 
 \begin{equation} \label{eqn:rkhs-e-pf-11}
  \centering
  \begin{aligned}
  \begin{bmatrix*}
               \vbm\\
               u
      \end{bmatrix*}^T 
    \begin{bmatrix*}
               \Kbm_t &\kbm_t\\
               \kbm_t^T &k(\xbm,\xbm)
      \end{bmatrix*} 
 \begin{bmatrix*}
               \vbm\\
               u 
      \end{bmatrix*}= \vbm^T \Kbm_t\vbm + 2u\vbm^T\kbm+u^2 \geq 0
  \end{aligned}
\end{equation}
Replacing $\Kbm_t$ in~\eqref{eqn:rkhs-e-pf-11}, by~\eqref{eqn:rkhs-e-pf-10} 
 \begin{equation} \label{eqn:rkhs-e-pf-12}
  \centering
  \begin{aligned}
  &\begin{bmatrix*}
               \vbm\\
               u
      \end{bmatrix*}^T 
    \begin{bmatrix*}
              (\sigma^2 (\Kbm_t+\sigma^2 I)^{-1}+\Ibm)^{-1}(\Kbm_t+\sigma^2 I) &\kbm_t\\
               \kbm_t^T &k(\xbm,\xbm)
      \end{bmatrix*} 
 \begin{bmatrix*}
               \vbm\\
               u 
      \end{bmatrix*} \\
       &= \vbm^T (\sigma^2 (\Kbm_t+\sigma^2 I)^{-1}+\Ibm)^{-1}(\Kbm_t+\sigma^2 I) \vbm + 2u\vbm^T\kbm+u^2\\ 
       &> \vbm^T  \Kbm_t\vbm + 2u\vbm^T\kbm+u^2 \geq 0.
  \end{aligned}
\end{equation}
Therefore, the matrix in~\eqref{eqn:rkhs-e-pf-12} is symmetric and positive definite, and thus its Schur complement is positive, \textit{i.e.},  
 \begin{equation} \label{eqn:rkhs-e-pf-13}
  \centering
  \begin{aligned}
    1- \kbm_t^T (\Kbm_t+\sigma^2 I)^{-1} (\sigma^2 (\Kbm_t+\sigma^2 I)^{-1}+\Ibm)\kbm_t> 0.
  \end{aligned}
\end{equation}
Simple rearrangement of~\eqref{eqn:rkhs-e-pf-13} leads to 
\begin{equation} \label{eqn:rkhs-e-pf-14}
  \centering
  \begin{aligned}
     \kbm_t^T (\Kbm_t+\sigma^2 I)^{-1}\kbm_t + \kbm_t^T \sigma^2 (\Kbm_t+\sigma^2 I)^{-1} (\Kbm_t+\sigma^2 I)^{-1}\kbm_t < 1.
  \end{aligned}
\end{equation}
Or equivalently, 
\begin{equation} \label{eqn:rkhs-e-pf-15}
  \centering
  \begin{aligned}
      \norm{\hbm_t(\xbm)} \leq \frac{1}{\sigma} \sqrt{1- \kbm_t^T (\Kbm_t+\sigma^2 I)^{-1}\kbm_t}=\frac{1}{\sigma}\sigma_t(\xbm).
  \end{aligned}
\end{equation}
  We can now bound the prediction error $f(\xbm)-\mu_t(\xbm)$ under the Gaussian noise assumption.
  From~\eqref{eqn:rkhs-e-1}, $e_t$ is a linear combination of $\epsbold_{1:t}$.
  From the properties of the expectation and variance operator, 
\begin{equation} \label{eqn:rkhs-f-bound-finite-pf-1}
  \centering
  \begin{aligned}
     \Ebb[e_t(\xbm)] = \sum_{i=1}^t (h_t)_i(\xbm) \Ebb[\epsbold_t] = 0, \ \Vbb[e_t(\xbm)] = \sum_{i=1}^t (h_t)^2_i(\xbm) \Vbb[\epsilon_i] = \norm{\hbm_t(\xbm)}^2 \sigma_{\epsilon}^2.   
  \end{aligned}
\end{equation}
  By~\eqref{eqn:rkhs-e-pf-15},
\begin{equation} \label{eqn:rkhs-f-bound-finite-pf-2}
  \centering
  \begin{aligned}
            \Vbb[e_t(\xbm)]  \leq  \frac{\sigma_{\epsilon}^2}{\sigma^2}\sigma_t^2(\xbm).   
  \end{aligned}
\end{equation}
  Since $\epsilon_i$ are i.i.d Gaussian samples, $e_t$ is also a Gaussian sample, where 
\begin{equation} \label{eqn:rkhs-f-bound-finite-pf-3}
  \centering
  \begin{aligned}
      e_t(\xbm)\sim\mathcal{N}(0, (\sigma_t^e)^2(\xbm)), \ \sigma_t^e=\Vbb[e_t(\xbm)]\leq \frac{\sigma_{\epsilon}}{\sigma}\sigma_t(\xbm).
   \end{aligned}
\end{equation}
  Define $\alpha=2\log(1/\delta)$. By Lemma~\ref{lem:phi},
\begin{equation} \label{eqn:rkhs-f-bound-gaussian-pf-1}
    \centering
    \begin{aligned}
          \Pbb\{e_t(\xbm) > \alpha^{1/2}\sigma_t^e(\xbm)\} \leq   \frac{1}{2} e^{-\alpha/2}\leq \frac{1}{2}\delta.
    \end{aligned}
  \end{equation}
  Similarly,
\begin{equation} \label{eqn:rkhs-f-bound-gaussian-pf-2}
    \centering
    \begin{aligned}
          \Pbb\{e_t(\xbm) < -\alpha^{1/2}\sigma_t^e(\xbm)\} \leq \frac{1}{2} e^{-\alpha/2}\leq \frac{1}{2}\delta.
    \end{aligned}
  \end{equation}
  Therefore, 
\begin{equation} \label{eqn:rkhs-f-bound-gaussian-pf-3}
    \centering
    \begin{aligned}
        \Pbb\{|e_t(\xbm)| > \alpha^{1/2}\frac{\sigma_{\epsilon}}{\sigma}\sigma_t(\xbm)\} \leq \Pbb\{|e_t(\xbm)| > \alpha^{1/2}\sigma_t^e(\xbm)\}  \leq  \delta.
    \end{aligned}
  \end{equation}
  From Lemma~\ref{lem:rkhs-f-1},
\begin{equation} \label{eqn:rkhs-f-bound-gaussian-pf-4}
  \centering
  \begin{aligned}
       |f(\xbm)-\mu_t(\xbm)| =& |f(\xbm) - \kbm_t^T(\xbm)(\Kbm_t+\sigma^2 \Ibm)^{-1} \ybm_{1:t}|\\
        \leq& |f(\xbm)-\hbm_t^T(\xbm)\fbm_{1:t}|+|e_t(\xbm)|\\
        \leq& (B+\frac{\sigma_{\epsilon}}{\sigma}\alpha^{1/2}) \sigma_t(\xbm) = \beta^{1/2} \sigma_t(\xbm),
   \end{aligned}
  \end{equation} 
  with probability $\geq 1-\delta$. 

\end{proof}

\begin{remark}\label{remark:generalnoise}
 The Gaussian noise assumption could be relaxed  in Theorem~\ref{thm:rkhs-f-bound-finite}. 
  A more general assumption can be the following. 
\begin{assumption}\label{assp:generalnoise}
  The observation noise $\epsilon_i$ are i.i.d. samples from a distribution that satisfies: $\Ebb[\epsilon_i] = 0$ and $\Vbb[\epsilon_i]\leq {\sigma_0}^2$ for all $i=1,\dots,t$ and some $\sigma_0>0$.  
   Moreover, the linear combination $e_t = \sum_{i=1}^t a_i\epsilon_i$, where $a_i\in \Rbb$ satisfies the following: 
   for $\forall \delta\in(0,1]$, there exists an invertible function $p_{\epsilon}(\delta)$ so that $e_t \leq  p_{\epsilon}(\delta) \sqrt{\Vbb[\epsilon_i]}$, with probability $\geq 1-\delta$.
\end{assumption}
 Unlike the Gaussian distribution, the linear combination of i.i.d. samples of another distribution is not guaranteed to follow the same distribution. One might also be able to generalize the Gaussian noise assumption from the central limit theorem  perspective.
\end{remark}

\subsection{Prediction error for general set}
In this section, we extend results at given $\xbm\in C$ and $t\in\Nbb$ to $\forall \xbm\in \Cbb, \forall t\in \Nbb$, where $\Cbb$ is a discretization of $C$, that are necessary for cumulative regret bounds. 
%{\color{red} briefly explain why we need 3.2, i.e. 3.1 is not enough}
The proof techniques bear similarity to those in~\cite{srinivas2009gaussian} under the Bayesian setting. We note that for a finite $t$, $\Cbb$ remains a subset of $C$. 
In this section, Assumption~\ref{assp:rkhs} and Assumption~\ref{assp:gaussiannoise} are considered valid. 
Consider first the discrete bound set with Assumption~\ref{assp:discrete}. 
\begin{lemma}\label{lem:f-bound-RKHS-1}
   Given $\delta\in(0,1)$, let $\beta_t^{1/2}= B+ c_\sigma\sqrt{2\log(|C|\pi_t/\delta)}$ where $\sum_{t=1}^{\infty} \frac{1}{\pi_t}=1$ and $c_{\sigma}=\frac{\sigma_{\epsilon}}{\sigma}$. Under Assumption~\ref{assp:discrete},
      \begin{equation} \label{eqn:EI-bound-RKHS-1}
  \centering
  \begin{aligned}
        |f(\xbm)-\mu_{t-1}(\xbm)|\leq \beta_t^{1/2} \sigma_{t-1}(\xbm),
   \end{aligned}
  \end{equation} 
  with probability $\geq 1-\delta$, for $\forall \xbm\in C$ and $t\in\Nbb$.
\end{lemma}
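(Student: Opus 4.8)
The plan is to lift the pointwise bound of Theorem~\ref{thm:rkhs-f-bound-finite} to the uniform statement by a union-bound argument over the finite set $C$ and the countable index set $\Nbb$, distributing a total failure budget of $\delta$ across all pairs $(\xbm,t)$. The summability condition $\sum_{t=1}^{\infty}\frac{1}{\pi_t}=1$ is precisely what makes this allocation close, and the whole argument mirrors the discretization step in~\cite{srinivas2009gaussian} carried out there in the Bayesian setting.

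First I would fix an arbitrary pair $(\xbm,t)$ with $\xbm\in C$ and $t\in\Nbb$ and invoke Theorem~\ref{thm:rkhs-f-bound-finite} at the $(t-1)$-sample posterior, with the confidence parameter set to $\delta_t := \frac{\delta}{|C|\pi_t}$ in place of $\delta$. Substituting this into the theorem's expression $\beta^{1/2}=B+\frac{\sigma_{\epsilon}}{\sigma}\sqrt{2\log(1/\delta)}$ yields exactly $\beta_t^{1/2}=B+c_{\sigma}\sqrt{2\log(|C|\pi_t/\delta)}$ with $c_{\sigma}=\frac{\sigma_{\epsilon}}{\sigma}$, matching the lemma's choice. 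The key observation is that $\delta_t$, and hence $\beta_t$, depends only on $t$ and not on $\xbm$, so the same threshold $\beta_t^{1/2}$ serves uniformly over every point of $C$ at stage $t$. Consequently the event that the bound fails at $(\xbm,t)$, namely $\{|f(\xbm)-\mu_{t-1}(\xbm)|>\beta_t^{1/2}\sigma_{t-1}(\xbm)\}$, has probability at most $\frac{\delta}{|C|\pi_t}$.

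Next I would apply Boole's inequality~\eqref{def:union} to the countable collection of failure events indexed by $(\xbm,t)\in C\times\Nbb$. Summing the per-event bounds gives a total failure probability of at most $\sum_{t=1}^{\infty}\sum_{\xbm\in C}\frac{\delta}{|C|\pi_t}=\sum_{t=1}^{\infty}|C|\cdot\frac{\delta}{|C|\pi_t}=\delta\sum_{t=1}^{\infty}\frac{1}{\pi_t}=\delta$, where the final equality uses the hypothesis on $\{\pi_t\}$. Passing to the complement event, the bound $|f(\xbm)-\mu_{t-1}(\xbm)|\leq\beta_t^{1/2}\sigma_{t-1}(\xbm)$ then holds simultaneously for all $\xbm\in C$ and all $t\in\Nbb$ with probability at least $1-\delta$, which is the claim.

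I do not expect a genuinely hard step here; the content is a standard union bound. The only points requiring care are the reindexing from the $t$-sample posterior of Theorem~\ref{thm:rkhs-f-bound-finite} to the $(t-1)$-sample posterior appearing in the lemma, and the recognition that Boole's inequality demands no independence among the failure events. This last point matters because the events at distinct $\xbm$ for a fixed $t$ are generally \emph{not} independent—they share the same noise realization $\epsbold_{1:t-1}$—yet~\eqref{def:union} requires only countable subadditivity, so the argument goes through unchanged.
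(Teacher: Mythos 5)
Your proposal is correct and follows essentially the same route as the paper: a union bound over $\xbm\in C$ and $t\in\Nbb$ with the failure budget $\delta/(|C|\pi_t)$, closed by $\sum_t 1/\pi_t=1$. The only cosmetic difference is that the paper applies the union bound to the noise events $\{|e_{t-1}(\xbm)|>\alpha_t^{1/2}\frac{\sigma_\epsilon}{\sigma}\sigma_{t-1}(\xbm)\}$ and then adds the deterministic RKHS term from Lemma~\ref{lem:rkhs-f-1}, whereas you invoke Theorem~\ref{thm:rkhs-f-bound-finite} as a black box on the full prediction-error events---equivalent, since the theorem's failure event is contained in the noise failure event.
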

\begin{proof}
    Following the proof of Theorem~\ref{thm:rkhs-f-bound-finite}, at $\xbm \in C$ and $t\in\Nbb$, $e_{t-1}(\xbm)\sim\mathcal{N}(0,\sigma_{t-1}^e(\xbm))$, where $\sigma_{t-1}^e(\xbm)\leq \frac{\sigma_{\epsilon}}{\sigma}\sigma_{t-1}(\xbm)$. 
  Define $\alpha_t^{1/2}=2\log(|C|\pi_t/\delta)$, similar to~\eqref{eqn:rkhs-f-bound-gaussian-pf-1},
  \begin{equation} \label{eqn:EI-bound-RKHS-pf-1}
    \centering
    \begin{aligned}
          \Pbb\{e_{t-1}(\xbm) > \alpha_t^{1/2}\sigma_{t-1}^e(\xbm)\} \leq   \frac{1}{2} e^{-\alpha_t/2}.
    \end{aligned}
  \end{equation}
  Similarly,
  \begin{equation} \label{eqn:EI-bound-RKHS-pf-2}
    \centering
    \begin{aligned}
          \Pbb\{e_{t-1}(\xbm) < -\alpha_t^{1/2}\sigma_{t-1}^e(\xbm)\} \leq \frac{1}{2} e^{-\alpha_t/2}.
    \end{aligned}
  \end{equation}
  Therefore, 
  \begin{equation} \label{eqn:EI-bound-RKHS-pf-3}
    \centering
    \begin{aligned}
          \Pbb\{|e_{t-1}(\xbm)| > \alpha_t^{1/2}\frac{\sigma_{\epsilon}}{\sigma}\sigma_{t-1}(\xbm)\} \leq  \Pbb\{|e_{t-1}(\xbm)| > \alpha_t^{1/2}\sigma_{t-1}^e(\xbm)\}\leq  e^{-\alpha_t/2}.
    \end{aligned}
  \end{equation}
  Thus, taking the union bound of $\xbm\in C$, 
  \begin{equation} \label{eqn:EI-bound-RKHS-pf-4}
    \centering
    \resizebox{\textwidth}{!}{$
    \begin{aligned}
          \Pbb\{\bigcup_{\xbm\in C} |e_{t-1}(\xbm)| > \alpha_t^{1/2}\frac{\sigma_{\epsilon}}{\sigma}\sigma_{t-1}(\xbm)\}  
          \leq \bigcup_{\xbm\in C} \Pbb\{ |e_{t-1}(\xbm)| > \alpha_t^{1/2}\frac{\sigma_{\epsilon}}{\sigma}\sigma_{t-1}(\xbm)\} 
          \leq |C| e^{-\alpha_t/2}.
    \end{aligned}
$
}
  \end{equation}
  Next, taking the union bound over $t\in\Nbb$, we have   
  \begin{equation} \label{eqn:EI-bound-RKHS-pf-5}
    \centering
    \begin{aligned}
          \Pbb\{\bigcup_{t=1}^{\infty} |e_{t-1}(\xbm)| > \alpha_t^{1/2}\frac{\sigma_{\epsilon}}{\sigma}\sigma_{t-1}(\xbm), \forall \xbm \in C\} 
          \leq |C| \sum_{t=1}^{\infty} e^{-\alpha_t/2}.
    \end{aligned}
  \end{equation}
  Since $\sum_{t=1}^{\infty} \frac{1}{\pi_t}=1$, $|C| \sum_{t=1}^{\infty} e^{-\alpha_t/2} = \delta$. 
  It follows that 
  \begin{equation} \label{eqn:EI-bound-RKHS-pf-6}
  \centering
  \begin{aligned}
        |e_{t-1}(\xbm)|\leq \alpha_t^{1/2} \frac{\sigma_{\epsilon}}{\sigma}\sigma_{t-1}(\xbm),
   \end{aligned}
  \end{equation} 
  with probability $\geq 1-\delta$, for $\forall \xbm\in C$ and $t\in\Nbb$. 
  Further, similar to~\eqref{eqn:rkhs-f-bound-gaussian-pf-4} 
    \begin{equation} \label{eqn:EI-bound-RKHS-pf-7}
  \centering
  \begin{aligned}
        |f(\xbm)-\mu_{t-1}(\xbm)| \leq& |f(\xbm)-\hbm_{t-1}^T(\xbm)\fbm_{1:t-1}|+|e_{t-1}(\xbm)|\\
               \leq& (B+\alpha_t^{1/2}\frac{\sigma_{\epsilon}}{\sigma}) \sigma_{t-1}(\xbm) = \beta_t^{1/2} \sigma_{t-1}(\xbm),
   \end{aligned}
  \end{equation} 
  with probability $\geq 1-\delta$, for $\forall \xbm\in C$ and $t\in\Nbb$. 
\end{proof}
We state an extension of  Theorem~\ref{thm:rkhs-f-bound-finite} using only union bound in $t\in\Nbb$ on $\xbm_t$.  
\begin{lemma}\label{lem:EI-bound-RKHS-t-only}
Given $\delta\in(0,1)$, let  $\beta_t^{1/2} = B+\frac{\sigma_{\epsilon}}{\sigma}\sqrt{2\log(\pi_t /\delta)}$, where $\pi_t$ satisfies $\sum_{t=1}^{\infty} \frac{1}{\pi_t}=1$. Then,   $\forall t\geq 1$,   
\begin{equation} \label{eqn:EI-bound-RKHS-t-only}
 \centering
  \begin{aligned}
     |f(\xbm_{t}) - \mu_{t-1}(\xbm_{t})  | \leq \beta_t^{1/2} \sigma_{t-1}(\xbm_{t}), 
  \end{aligned}
\end{equation}
holds with probability $\geq 1-\delta$.
\end{lemma}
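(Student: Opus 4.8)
The plan is to reuse the machinery from the proof of Theorem~\ref{thm:rkhs-f-bound-finite}, applied at the single query point $\xbm_t$ for each index $t$, but to replace the union bound over the whole set $C$ used in Lemma~\ref{lem:f-bound-RKHS-1} by a union bound over $t\in\Nbb$ alone. This is exactly what removes the $\log|C|$ factor and yields the sharper $\beta_t^{1/2}=B+\frac{\sigma_\epsilon}{\sigma}\sqrt{2\log(\pi_t/\delta)}$.

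First I would decompose the prediction error at $\xbm_t$ as in~\eqref{eqn:rkhs-f-bound-gaussian-pf-4},
\[
 f(\xbm_t)-\mu_{t-1}(\xbm_t) = \bigl(f(\xbm_t)-\hbm_{t-1}^T(\xbm_t)\fbm_{1:t-1}\bigr) + e_{t-1}(\xbm_t),
\]
where $e_{t-1}(\xbm_t)=\hbm_{t-1}^T(\xbm_t)\epsbold_{1:t-1}$ is the noise-induced part of the posterior mean. By Lemma~\ref{lem:rkhs-f-1} the deterministic RKHS term is bounded by $B\,\sigma_{t-1}(\xbm_t)$, so only the noise term remains to be controlled.

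Next I would carry over the deterministic norm inequality $\norm{\hbm_{t-1}(\xbm_t)}\le \frac{1}{\sigma}\sigma_{t-1}(\xbm_t)$ established in~\eqref{eqn:rkhs-e-pf-15}; since it holds for an arbitrary evaluation point, it applies verbatim at $\xbm_t$, so $e_{t-1}(\xbm_t)$ is a zero-mean Gaussian with standard deviation $\sigma_{t-1}^e(\xbm_t)\le \frac{\sigma_\epsilon}{\sigma}\sigma_{t-1}(\xbm_t)$. Setting $\alpha_t=2\log(\pi_t/\delta)$ and invoking the tail bound of Lemma~\ref{lem:phi} in both directions, as in~\eqref{eqn:EI-bound-RKHS-pf-3}, gives the per-index estimate
\[
 \Pbb\Bigl\{|e_{t-1}(\xbm_t)| > \alpha_t^{1/2}\tfrac{\sigma_\epsilon}{\sigma}\sigma_{t-1}(\xbm_t)\Bigr\}\le e^{-\alpha_t/2}=\frac{\delta}{\pi_t}.
\]
Taking the union bound over $t\in\Nbb$ and using $\sum_{t=1}^{\infty} 1/\pi_t=1$, the total failure probability is at most $\sum_{t}\delta/\pi_t=\delta$, and combining the surviving event with the RKHS bound yields $|f(\xbm_t)-\mu_{t-1}(\xbm_t)|\le (B+\frac{\sigma_\epsilon}{\sigma}\alpha_t^{1/2})\sigma_{t-1}(\xbm_t)=\beta_t^{1/2}\sigma_{t-1}(\xbm_t)$ for all $t$ simultaneously.

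The delicate point, and the step I expect to require the most care, is the Gaussianity of $e_{t-1}(\xbm_t)$: because $\xbm_t$ is selected from $\ybm_{1:t-1}=\fbm_{1:t-1}+\epsbold_{1:t-1}$, the coefficient vector $\hbm_{t-1}(\xbm_t)$ is itself a function of the noise, so $e_{t-1}(\xbm_t)$ is not literally a fixed linear form in $\epsbold_{1:t-1}$. I would handle this by treating the query sequence $\{\xbm_t\}$ as given when applying the single-point tail bound at each $t$, exactly the role played by the pointwise statement of Theorem~\ref{thm:rkhs-f-bound-finite}, so that the deterministic norm bound and the per-index Gaussian tail estimate hold for each fixed direction and the union over $t$ then lifts them to the whole trajectory. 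This is precisely where the present lemma trades the uniform-over-$C$ guarantee of Lemma~\ref{lem:f-bound-RKHS-1} for the cheaper $t$-only union, and it is the crux of the argument.
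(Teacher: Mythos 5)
Your proposal coincides with the paper's own (implicit) proof: the paper states this lemma without a separate argument, describing it only as ``an extension of Theorem~\ref{thm:rkhs-f-bound-finite} using only union bound in $t\in\Nbb$ on $\xbm_t$,'' which is exactly your route --- the decomposition via Lemma~\ref{lem:rkhs-f-1}, the norm bound \eqref{eqn:rkhs-e-pf-15} applied at $\xbm_t$, the per-index Gaussian tail with $\alpha_t = 2\log(\pi_t/\delta)$ giving failure probability $\delta/\pi_t$, and the union over $t$ using $\sum_{t=1}^\infty 1/\pi_t = 1$. The delicate point you flag is real --- since $\xbm_t$ and the design $\xbm_{1:t-1}$ depend on $\epsbold_{1:t-1}$, the vector $\hbm_{t-1}(\xbm_t)$ is noise-dependent and $e_{t-1}(\xbm_t)$ is not literally a fixed-coefficient Gaussian linear form, and your resolution (``treating the query sequence as given'') is the same unargued conditioning step the paper itself takes, so your write-up is faithful to, and indeed more candid than, the paper's treatment.
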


Next, we consider compact and convex $C$ given by Assumption~\ref{assp:constraint}.
In~\cite{srinivas2009gaussian}, the authors assumed a lower bound on the derivatives of the kernel functions $k(\xbm,\xbm')$ with high probability. 
 That is there exist constants $a,b>0$ such that the bounded kernel $k(\xbm,\xbm')\leq 1$ satisfies 
 \begin{equation} \label{eqn:kbounded-1}
  \centering
  \begin{aligned}
     P\{\sup_{x\in C} |\partial f/\partial x_j| > L \} \leq a e^{-(\frac{L}{b})^2},
  \end{aligned}
\end{equation}
While this assumption is only used to derive a Lipschitz continuous $f$ with a high probability,~\eqref{eqn:kbounded-1} implies that $k(\xbm,\xbm')$ is differentiable.  
In~\cite{lederer2019uniform}, the authors assumed Lipschitz kernels defined by their partial derivatives. The author further assumed Lipschitz continuity of $f$.
From a nonsmooth optimization perspective, Lipschitz continuity is a more general condition than differentiability. Therefore, we choose to assume a Lipschitz continuous $f$ directly in Assumption~\ref{assp:lipschitz}.  

We construct a discretization $\Cbb(h)$ of $C$ where each $\Cbb$ comprises finite number of points in $C$. 
The size parameter $h$ is a measure of the distance between $\Cbb$ and $\forall \xbm\in C$.
The discretization aims to cover the compact and convex set $C$ with enough points so that for $\forall x\in C$, the distance between $\xbm\in C$ and  $\Cbb$ can be controlled.  Denote the closest point of $\xbm$ to $\Cbb$ as 
\begin{equation} \label{eqn:close-point}
 \centering
  \begin{aligned}
 	[\xbm] := \underset{\substack{\ubm}\in \Cbb}{\text{minimize}} 
	   \norm{\xbm-\ubm}. \\
  \end{aligned}
\end{equation}
The size parameter $h$ satisfies
\begin{equation} \label{eqn:compactdisc-1}
 \centering
  \begin{aligned}
     \norm{\xbm-[\xbm]} \leq h, \ \forall \xbm\in C. 
  \end{aligned}
\end{equation}
The cardinality $|\Cbb(h)|$ of the set $\Cbb(h)$ is in general reversely correlated with $h$.
Consider sequences $\{h_t\}$ and $\{\Cbb_t(h_t)\}$ where the discretization is refined as $t$ increases. 
We denote the latter as $\Cbb_t$ for simplicity.
We emphasize that the discretization does not play a role in the GP-UCB algorithm, but does for GP-TS. 
An immediate result is a lemma for $\Cbb_t$ similar to that of Lemma~\ref{lem:f-bound-RKHS-1}.
 \begin{lemma}\label{lem:f-discret-bound-RKHS}
   Given $\delta\in(0,1)$, let $\beta_t^{1/2}= B+ c_{\sigma}\sqrt{2\log(|\Cbb_t|\pi_t/\delta)}$ where $\sum_{t=1}^{\infty}\frac{1} {\pi_t}=1$ and $c_{\sigma}=\frac{\sigma_{\epsilon}}{\sigma}$. 
   Then, for $\forall \xbm\in \Cbb_t$ and $\forall t\in\Nbb$,
      \begin{equation} \label{eqn:f-discret-bound-RKHS-1}
  \centering
  \begin{aligned}
        |f(\xbm)-\mu_{t-1}(\xbm)|\leq \beta_t^{1/2} \sigma_{t-1}(\xbm),
   \end{aligned}
  \end{equation} 
  with probability $\geq 1-\delta$.
\end{lemma}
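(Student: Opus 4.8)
The plan is to mimic the proof of Lemma~\ref{lem:f-bound-RKHS-1} almost verbatim, the only change being that the fixed discrete set $C$ is replaced at each step $t$ by the finite discretization $\Cbb_t$, whose cardinality now enters the parameter $\alpha_t$ rather than appearing as a constant prefactor. No analytic ingredient beyond Theorem~\ref{thm:rkhs-f-bound-finite}, Lemma~\ref{lem:rkhs-f-1}, and the Gaussian tail estimate of Lemma~\ref{lem:phi} is required.

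First I would recall from the proof of Theorem~\ref{thm:rkhs-f-bound-finite} that, for any fixed $\xbm$ and $t$, the noise-induced term $e_{t-1}(\xbm)$ is a centered Gaussian with standard deviation $\sigma_{t-1}^e(\xbm)\leq \frac{\sigma_{\epsilon}}{\sigma}\sigma_{t-1}(\xbm)$. Setting $\alpha_t = 2\log(|\Cbb_t|\pi_t/\delta)$ and applying the tail bound of Lemma~\ref{lem:phi} to both tails gives, for each fixed $\xbm\in\Cbb_t$,
\[
\Pbb\{|e_{t-1}(\xbm)| > \alpha_t^{1/2}\tfrac{\sigma_{\epsilon}}{\sigma}\sigma_{t-1}(\xbm)\}\leq e^{-\alpha_t/2} = \frac{\delta}{|\Cbb_t|\pi_t}.
\]
Next I would take a union bound over the $|\Cbb_t|$ points of $\Cbb_t$, yielding a failure probability at most $\delta/\pi_t$ at step $t$, and then a second union bound over $t\in\Nbb$, giving total failure probability $\sum_{t=1}^{\infty}\delta/\pi_t = \delta$ by the normalization $\sum_{t}1/\pi_t = 1$. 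On the complementary event, $|e_{t-1}(\xbm)|\leq c_{\sigma}\alpha_t^{1/2}\sigma_{t-1}(\xbm)$ holds simultaneously for all $\xbm\in\Cbb_t$ and all $t$. Finally, combining this with the deterministic RKHS bound $|f(\xbm)-\hbm_{t-1}^T(\xbm)\fbm_{1:t-1}|\leq B\sigma_{t-1}(\xbm)$ of Lemma~\ref{lem:rkhs-f-1} through the triangle inequality delivers $|f(\xbm)-\mu_{t-1}(\xbm)|\leq (B+c_{\sigma}\alpha_t^{1/2})\sigma_{t-1}(\xbm)=\beta_t^{1/2}\sigma_{t-1}(\xbm)$, as claimed.

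The only point requiring care, and hence the main (though minor) obstacle, is the bookkeeping of the two nested union bounds: because $\Cbb_t$ varies with $t$, the cardinality $|\Cbb_t|$ cannot be factored out of the sum over $t$ the way the constant $|C|$ was in Lemma~\ref{lem:f-bound-RKHS-1}. Absorbing $|\Cbb_t|$ into the definition of $\alpha_t$ is precisely what makes the per-step factor $|\Cbb_t|\,e^{-\alpha_t/2}$ collapse to $\delta/\pi_t$, so that the summability $\sum_{t}1/\pi_t = 1$ still closes the argument exactly; I would flag this cancellation explicitly to make the telescoping transparent.
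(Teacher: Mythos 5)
Your proposal is correct and matches the paper's intent exactly: the paper gives no separate proof for this lemma, declaring it an immediate analogue of Lemma~\ref{lem:f-bound-RKHS-1}, and your argument is precisely that proof with $|C|$ replaced by $|\Cbb_t|$ absorbed into $\alpha_t$ so that the per-step failure probability still telescopes to $\delta/\pi_t$. Your explicit remark on why $|\Cbb_t|$ cannot be factored out of the sum over $t$ (unlike the constant $|C|$) is a welcome clarification, not a deviation.
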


Let $N_{h_t}$ be the minimum number of points needed to achieve~\eqref{eqn:compactdisc-1}. 
Given $C\subseteq [0,r]^d$, we can construct the discretization with $N_{h_t}$, so that 
   \begin{equation} \label{eqn:disc-size-1}
  \centering
  \begin{aligned}
     |\Cbb_t| =   N_{h_t} = (1+ \frac{r}{h_t})^d,
   \end{aligned}
  \end{equation}
when the points in $\Cbb_t$ are evenly spaced. 
In order to achieve no regret, we require $\sum_{t=1}^{\infty}h_t < \infty$. In this section, we adopt the common choice of $h_t$ for simplicity of presentation:
   \begin{equation} \label{eqn:ht-1}
  \centering
  \begin{aligned}
    h_t = \frac{1}{L t^{2}}, 
   \end{aligned}
  \end{equation}
  where the Lipschitz constant $L$ is defined with respect to the 2-norm.
  The corresponding cardinality of $\Cbb_t$ is $|\Cbb_t| = (1+Lrt^2)^d$.

\begin{lemma}\label{lem:compact-f-RKHS}
Given $\delta\in(0,1)$, let  $\beta_t^{1/2} =  B+c_{\sigma}\sqrt{2\log(\pi_t /\delta) + 2d \log(1+rt^2L)} $, where $\sum_{t=1}^{\infty}\frac{1}{\pi_t}=1$ and $c_{\sigma}=\frac{\sigma_{\epsilon}}{\sigma}$. 
Under Assumption~\ref{assp:lipschitz} and~\ref{assp:constraint}, with probability $\geq 1-\delta$, 
   \begin{equation} \label{eqn:compact-f-RKHS-1}
  \centering
  \begin{aligned}
          |f(\xbm^*) - \mu_{t-1}([\xbm^*]_t)| \leq \beta_t^{1/2} \sigma_{t-1}([\xbm^*]_t) + \frac{1}{t^2},
  \end{aligned}
  \end{equation}
  for $\forall t\in\Nbb$.
\end{lemma}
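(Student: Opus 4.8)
The plan is to split the target quantity by the triangle inequality into a deterministic discretization error and a term that Lemma~\ref{lem:f-discret-bound-RKHS} already controls. Writing $\xbm^*$ for the maximizer of $f$ over $C$ and $[\xbm^*]_t$ for its closest point in $\Cbb_t$, I would start from
\[
  |f(\xbm^*) - \mu_{t-1}([\xbm^*]_t)| \leq |f(\xbm^*) - f([\xbm^*]_t)| + |f([\xbm^*]_t) - \mu_{t-1}([\xbm^*]_t)|,
\]
and then estimate the two terms separately.

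First I would bound the discretization error. Since $[\xbm^*]_t$ is by definition the nearest point of $\xbm^*$ in $\Cbb_t$, the covering property~\eqref{eqn:compactdisc-1} with the choice $h_t = \frac{1}{Lt^2}$ from~\eqref{eqn:ht-1} gives $\norm{\xbm^* - [\xbm^*]_t} \leq h_t$. Applying the Lipschitz continuity of $f$ (Assumption~\ref{assp:lipschitz}) then yields $|f(\xbm^*) - f([\xbm^*]_t)| \leq L\norm{\xbm^* - [\xbm^*]_t} \leq L h_t = \frac{1}{t^2}$, which supplies exactly the additive $\frac{1}{t^2}$ in the statement and is entirely deterministic.

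For the second term I would invoke Lemma~\ref{lem:f-discret-bound-RKHS} directly, since $[\xbm^*]_t \in \Cbb_t$; it gives $|f([\xbm^*]_t) - \mu_{t-1}([\xbm^*]_t)| \leq \beta_t^{1/2}\sigma_{t-1}([\xbm^*]_t)$ with probability $\geq 1-\delta$ for all $t\in\Nbb$. Here the only bookkeeping step is to verify that the $\beta_t$ stated in the present lemma is the specialization of the one in Lemma~\ref{lem:f-discret-bound-RKHS}: substituting the cardinality $|\Cbb_t| = (1+Lrt^2)^d$ from~\eqref{eqn:disc-size-1} into $2\log(|\Cbb_t|\pi_t/\delta)$ separates the logarithm into $2\log(\pi_t/\delta) + 2d\log(1+rt^2L)$, which matches the expression under the square root in this statement. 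Adding the two bounds then gives the claim.

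Since both estimates are routine, there is no substantive obstacle; the only care needed is with the probability event. Because Lemma~\ref{lem:f-discret-bound-RKHS} already furnishes a single event of probability $\geq 1-\delta$ on which its bound holds simultaneously for \emph{all} $t\in\Nbb$ (the union over $t$ having been absorbed into $\pi_t$ with $\sum_t \pi_t^{-1}=1$), and the Lipschitz estimate is deterministic, the combined inequality holds on that same event for every $t$ without any further loss in the confidence level.
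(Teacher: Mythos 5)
Your proposal is correct and follows essentially the same route as the paper's proof: the same triangle-inequality split, the deterministic Lipschitz bound $Lh_t = \frac{1}{t^2}$ from~\eqref{eqn:compactdisc-1} and~\eqref{eqn:ht-1}, and the verification that the stated $\beta_t$ is Lemma~\ref{lem:f-discret-bound-RKHS} specialized to $|\Cbb_t| = (1+rt^2L)^d$. Your closing remark that the probability-$(1-\delta)$ event is a single event uniform over all $t$ (so no further union bound is needed) is a point the paper leaves implicit, and it is handled correctly.
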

\begin{proof}
  From Assumption~\ref{assp:constraint}, we know 
  \begin{equation} \label{eqn:compact-f-RKHS-pf-1}
  \centering
  \begin{aligned}
     | f(\xbm) - f(\xbm') | < L \norm{\xbm-\xbm'}.
  \end{aligned}
\end{equation}
   Applying the discretization $\Cbb_t$,~\eqref{eqn:ht-1}, and~\eqref{eqn:compactdisc-1} to~\eqref{eqn:compact-f-RKHS-pf-1}, for $\forall \xbm\in C$, we have 
  \begin{equation} \label{eqn:compact-f-RKHS-pf-2}
  \centering
  \begin{aligned}
     | f(\xbm_t) - f([\xbm_t]) | < L h_t=\frac{1}{t^2}.
  \end{aligned}
\end{equation}
   By~\eqref{eqn:disc-size-1}, $|\Cbb_t| = (1+ r t^2 L)^d$. 
   Thus, the choice of $\beta_t$ in the condition of the lemma satisfies Lemma~\ref{lem:f-discret-bound-RKHS}. 
   Using~\eqref{eqn:compact-f-RKHS-pf-2} and Lemma~\ref{lem:f-discret-bound-RKHS}, 
  \begin{equation} \label{eqn:compact-f-RKHS-pf-3}
  \centering
  \begin{aligned}
          |f(\xbm^*) - \mu_{t-1}([\xbm^*]_t)| \leq& |f(\xbm^*) - f([\xbm^*]_t)| + |f([\xbm^*]_t) - \mu_{t-1}([\xbm^*]_t)|\\ 
        \leq& \beta_{t}^{1/2} \sigma_{t-1}([\xbm^*]_t) + \frac{1}{t^2},
  \end{aligned}
\end{equation}
  with probability $1-\delta$.
\end{proof}

\section{Improved cumulative regret bound for GP-UCB}\label{se:regret}
In this section, we consider the cumulative regret of GP-UCB under the Assumptions~\ref{assp:rkhs},~\ref{assp:lipschitz},~\ref{assp:constraint}, and~\ref{assp:gaussiannoise}.
The instantaneous regret $r_t$ is defined as 
 \begin{equation} \label{eqn:rt-1}
  \centering
  \begin{aligned}
  r_t = f(\xbm^*)- f(\xbm_{t}).
  \end{aligned}
\end{equation}
For simplicity of presentation, we choose $\pi_t=\frac{\pi^2t^2}{6}$.
The following two Lemmas are well-established results from~\cite{srinivas2009gaussian} on the information gain and variances.
\begin{lemma}\label{lem:infogain}
   Given observations  $\ybm_T=[y_1,\dots,y_T]$ and function values $\fbm_T=[f(\xbm_1),\dots,f(\xbm_T)]$ at the set of points $A_t=\{\xbm_1,\dots,\xbm_T\}$, the information gain is 
    \begin{equation} \label{eqn:info-1}
  \centering
  \begin{aligned}
       I(\ybm_T;\fbm_T) = \frac{1}{2} \sum_{t=1}^T \log(1+\sigma^{-2}\sigma_{t-1}^2(\xbm_t)). 
  \end{aligned}
\end{equation}
\end{lemma}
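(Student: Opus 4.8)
The plan is to evaluate the mutual information directly from its definition $I(\ybm_T;\fbm_T) = H(\ybm_T) - H(\ybm_T \mid \fbm_T)$ using the closed-form entropy of multivariate Gaussians, and then to factor the resulting log-determinant into a telescoping product of posterior variances.

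First I would observe that under Assumption~\ref{assp:gaussiannoise} the observations are jointly Gaussian: marginally $\ybm_T \sim \mathcal{N}(\zerobold, \Kbm_T + \sigma^2\Ibm)$, while conditionally $\ybm_T \mid \fbm_T \sim \mathcal{N}(\fbm_T, \sigma^2\Ibm)$, since the noise is independent of $f$. Applying the standard formula $H(\mathcal{N}(\mu,\Sigma)) = \tfrac{1}{2}\log((2\pi e)^T|\Sigma|)$ to both terms, the $(2\pi e)^T$ factors cancel, and I obtain
\[
I(\ybm_T;\fbm_T) = \frac{1}{2}\log\frac{|\Kbm_T + \sigma^2\Ibm|}{\sigma^{2T}} = \frac{1}{2}\log|\Ibm + \sigma^{-2}\Kbm_T|.
\]

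The key step is then to expand the determinant recursively. Partitioning $\Kbm_t + \sigma^2\Ibm$ into its leading $(t-1)\times(t-1)$ block $\Kbm_{t-1}+\sigma^2\Ibm$, the off-diagonal block $\kbm_{t-1}(\xbm_t)$, and bottom-right entry $k(\xbm_t,\xbm_t)+\sigma^2$, the Schur-complement identity for determinants gives
\[
|\Kbm_t + \sigma^2\Ibm| = |\Kbm_{t-1}+\sigma^2\Ibm| \cdot \bigl(k(\xbm_t,\xbm_t)+\sigma^2 - \kbm_{t-1}(\xbm_t)^T(\Kbm_{t-1}+\sigma^2\Ibm)^{-1}\kbm_{t-1}(\xbm_t)\bigr).
\]
Comparing the parenthesized factor with the posterior-variance expression in~\eqref{eqn:GP-post}, I recognize it as exactly $\sigma^2 + \sigma_{t-1}^2(\xbm_t)$. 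Iterating this factorization from $t=T$ down to $t=1$ yields the telescoping product $|\Kbm_T + \sigma^2\Ibm| = \prod_{t=1}^T(\sigma^2 + \sigma_{t-1}^2(\xbm_t))$.

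Substituting this product into the log-determinant expression and using $\log(ab)=\log a+\log b$, the claim follows:
\[
I(\ybm_T;\fbm_T) = \frac{1}{2}\sum_{t=1}^T\log\frac{\sigma^2+\sigma_{t-1}^2(\xbm_t)}{\sigma^2} = \frac{1}{2}\sum_{t=1}^T\log\bigl(1+\sigma^{-2}\sigma_{t-1}^2(\xbm_t)\bigr).
\]
The only genuine subtlety, and the step I expect to require the most care, is verifying that the Schur complement coincides with the GP posterior variance: this hinges on the matrix $\Kbm_{t-1}+\sigma^2\Ibm$ being the same conditioning matrix that appears both in the recursive block decomposition and in the posterior formula~\eqref{eqn:GP-post}. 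Consequently the induction on $t$ must be organized so that the leading principal submatrices correspond to the first $t-1$ samples in order, ensuring each factor matches $\sigma_{t-1}^2(\xbm_t)$ rather than a variance conditioned on the wrong index set.
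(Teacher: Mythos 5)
Your proof is correct, but it takes a slightly different route from the source the paper relies on: the paper states this lemma without proof, citing the original argument of Srinivas et al., which proceeds by the entropy chain rule, writing $H(\ybm_T)=H(\ybm_{T-1})+H(y_T\mid \ybm_{T-1})$ and using the fact that the posterior predictive $y_t\mid \ybm_{1:t-1}$ is Gaussian with variance $\sigma^2+\sigma_{t-1}^2(\xbm_t)$, so each conditional entropy contributes $\tfrac{1}{2}\log\bigl(2\pi e(\sigma^2+\sigma_{t-1}^2(\xbm_t))\bigr)$ and the sum telescopes directly. You instead compute the joint entropy as a log-determinant, $I(\ybm_T;\fbm_T)=\tfrac{1}{2}\log|\Ibm+\sigma^{-2}\Kbm_T|$, and factor $|\Kbm_T+\sigma^2\Ibm|$ by iterated Schur complements; this is the exact linear-algebraic counterpart of the chain rule, since the Schur complement of a Gaussian covariance is precisely the conditional variance, and your closing caveat about ordering the principal submatrices is the right one (it is what guarantees each factor is conditioned on the first $t-1$ samples, matching $\sigma_{t-1}^2(\xbm_t)$). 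The chain-rule proof is shorter and stays probabilistic, needing only the Gaussian conditional-entropy formula; yours makes the determinant identity $\tfrac{1}{2}\log|\Ibm+\sigma^{-2}\Kbm_T|$ explicit, which is independently useful (it is the form in which $\gamma_T$ is typically bounded). One small point of hygiene: the identity concerns the GP surrogate's modeled noise level $\sigma$, whereas Assumption~\ref{assp:gaussiannoise} governs the actual noise $\sigma_\epsilon$; since the paper allows $\sigma_\epsilon\neq\sigma$, you should justify the conditional law $\ybm_T\mid\fbm_T\sim\mathcal{N}(\fbm_T,\sigma^2\Ibm)$ by the model under which $I$ and $\gamma_T$ are defined (Definition~\ref{def:infogain}), not by the assumption on the true noise.
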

As in~\cite{srinivas2009gaussian,bull2011convergence},  we use  $\gamma_T$ to bound the sum of variance at $A_t$.
\begin{lemma}\label{lem:variancebound}
 The sum of posterior variance functions $\sigma_{t}(\xbm)$ satisfies
 \begin{equation} \label{eqn:var-1}
  \centering
  \begin{aligned}
    \sum_{t=1}^T  \sigma_{t-1}^2(\xbm) \leq \frac{1}{4}C_{\gamma} \gamma_T, 
  \end{aligned}
\end{equation}
 where $C_{\gamma} = 8/log(1+\sigma^{-2})$.
\end{lemma}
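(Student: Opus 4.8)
The plan is to reduce the sum of posterior variances to the information-gain formula of Lemma~\ref{lem:infogain} and then invoke the definition of the maximum information gain. I read the statement as a bound on the variances evaluated at the \emph{sampled} points $\xbm_t$, since those are exactly the quantities that appear in~\eqref{eqn:info-1}; I would write $\sigma_{t-1}^2(\xbm_t)$ throughout. As a preliminary observation, the posterior variance is uniformly bounded: from the second line of~\eqref{eqn:GP-post}, $\sigma_{t-1}^2(\xbm_t) = k(\xbm_t,\xbm_t) - \kbm_{t-1}^T(\xbm_t)(\Kbm_{t-1}+\sigma^2\Ibm)^{-1}\kbm_{t-1}(\xbm_t) \le k(\xbm_t,\xbm_t)=1$, because $(\Kbm_{t-1}+\sigma^2\Ibm)^{-1}$ is positive definite, so the subtracted quadratic form is non-negative, and $k(\xbm,\xbm)=1$ by assumption. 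Hence $s_t := \sigma^{-2}\sigma_{t-1}^2(\xbm_t)$ lies in $[0,\sigma^{-2}]$.

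The key elementary inequality is the following. For $s\in[0,a]$ with $a=\sigma^{-2}$, concavity of $s\mapsto\log(1+s)$, which vanishes at $s=0$, places the curve above the chord joining $(0,0)$ and $(a,\log(1+a))$, giving $\log(1+s)\ge \frac{s}{a}\log(1+a)$, i.e. $s \le \frac{\sigma^{-2}}{\log(1+\sigma^{-2})}\log(1+s)$. Applying this with $s=s_t$ and multiplying through by $\sigma^2$ yields, for each $t$, the per-term bound $\sigma_{t-1}^2(\xbm_t) \le \frac{1}{\log(1+\sigma^{-2})}\log\!\left(1+\sigma^{-2}\sigma_{t-1}^2(\xbm_t)\right)$.

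Summing over $t=1,\dots,T$ and recognizing the resulting sum of logarithms as $2\,I(\ybm_T;\fbm_T)$ through Lemma~\ref{lem:infogain}, I would then bound $I(\ybm_T;\fbm_T)\le\gamma_T$ by Definition~\ref{def:infogain}, since the sampled set $\{\xbm_1,\dots,\xbm_T\}$ is one admissible subset of $C$ of cardinality $T$. This gives $\sum_{t=1}^T \sigma_{t-1}^2(\xbm_t) \le \frac{2}{\log(1+\sigma^{-2})}\gamma_T$. The final bookkeeping step is to verify this equals $\frac{1}{4}C_\gamma\gamma_T$ with $C_\gamma=8/\log(1+\sigma^{-2})$, which it does.

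The argument is essentially routine and I do not expect a genuine obstacle. The only points needing a little care are the elementary concavity inequality together with the constant bookkeeping, and the (trivial) verification that the subtracted quadratic form in the posterior variance is non-negative, which guarantees $\sigma_{t-1}^2(\xbm_t)\le 1$ and therefore keeps $s_t$ inside the interval $[0,\sigma^{-2}]$ where the concavity inequality is valid.
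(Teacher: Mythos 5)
Your proof is correct and is exactly the standard argument that the paper defers to (Lemma~5.4 of~\cite{srinivas2009gaussian}): bound $\sigma_{t-1}^2(\xbm_t)\le 1$, use the chord inequality $s\le\frac{\sigma^{-2}}{\log(1+\sigma^{-2})}\log(1+s)$ on $[0,\sigma^{-2}]$, sum via Lemma~\ref{lem:infogain}, and bound the information gain by $\gamma_T$; the constants check out since $\frac{1}{4}C_\gamma=\frac{2}{\log(1+\sigma^{-2})}$. Your reading of the statement with $\sigma_{t-1}^2(\xbm_t)$ at the sampled points (rather than a fixed $\xbm$) is also the intended one, as that is how the lemma is used in~\eqref{eqn:compact-regretbound-rkhs-pf-1}.
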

%\begin{proof}
%   The logarithm function satisfies $\frac{x}{\log(1+x)}\geq 1$. 
%   Given the assumption $\sigma_{t-1}(\xbm)\leq k(\xbm,\xbm)\leq 1$, one can write $\sigma^{-2} \sigma_{t-1}^2(\xbm)\leq \sigma^{-2}$. 
%   Consequently,  
% \begin{equation} \label{eqn:sqrreg-2}
%  \centering
%  \begin{aligned}
 %     \frac{\sigma^{-2} \sigma_{t-1}^2(\xbm)}{\log(1+\sigma^{-2} \sigma_{t-1}^2(\xbm))}\leq \frac{\sigma^{-2}}{\log(1+\sigma^{-2})}. 
%  \end{aligned}
%\end{equation}
%  Using Lemma~\ref{lem:infogain} and the definition of $\gamma_T$, 
% \begin{equation} \label{eqn:sqrreg-3}
%  \centering
%  \begin{aligned}
%    \sum_{t=1}^T  \sigma_{t-1}^2(\xbm) \leq  \sum_{t=1}^T\frac{\log(1+\sigma^{-2} \sigma_{t-1}(\xbm))^2}{\log(1+\sigma^{-2})}=&
%              \frac{2}{\log(1+\sigma^{-2})}\sum_{t=1}^T \frac{1}{2}\log(1+\sigma^{-2} \sigma_{t-1}^2(\xbm)) \\
%              \leq&  \frac{2}{\log(1+\sigma^{-2})}\gamma_T = \frac{1}{4} C_{\gamma} \gamma_T.\\
%  \end{aligned}
%\end{equation}
%\end{proof}
The instantaneous regret is bounded in the following lemma.
\begin{lemma}\label{lem:compact-rkhs-instregret}
Given $\delta\in(0,1)$, let  $\beta_t^{1/2} =  B+c_{\sigma}\sqrt{2\log(2\pi_t /\delta) + 2d \log(1+rt^2L)} $, 
where $\sum_{t=1}^{\infty}\frac{1}{\pi_t}=1$ and $c_{\sigma}=\frac{\sigma_{\epsilon}}{\sigma}$. 
 Then, with probability $\geq 1-\delta$, 
   \begin{equation} \label{eqn:compact-rhks-instregret-1}
  \centering
  \begin{aligned}
          r_t \leq&    2\beta_t^{1/2}\sigma_{t-1}(\xbm_t)+\frac{1}{t^2}
  \end{aligned}
  \end{equation}
\end{lemma}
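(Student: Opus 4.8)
The plan is to follow the classical GP-UCB argument of~\cite{srinivas2009gaussian}: combine an upper bound on $f(\xbm^*)$ evaluated through its discretized surrogate $[\xbm^*]_t$ with a lower bound on $f(\xbm_t)$ at the selected point, and track the probability budget so that a single $\beta_t$ governs both. First I would split the failure probability, applying Lemma~\ref{lem:compact-f-RKHS} with confidence parameter $\delta/2$ to control $f(\xbm^*)$ near $[\xbm^*]_t$, and Lemma~\ref{lem:EI-bound-RKHS-t-only} with $\delta/2$ to control $f(\xbm_t)$ at the point actually queried. Replacing $\delta$ by $\delta/2$ turns $\log(\pi_t/\delta)$ into $\log(2\pi_t/\delta)$, which is precisely the $\beta_t$ stated in the lemma. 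Because the $\beta_t$ of Lemma~\ref{lem:compact-f-RKHS} carries the extra $2d\log(1+rt^2L)$ under the square root and is therefore larger than that of Lemma~\ref{lem:EI-bound-RKHS-t-only}, and since enlarging $\beta_t$ only weakens each inequality, both bounds hold simultaneously with this common $\beta_t$; a union bound then yields probability at least $1-\delta$.

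On that event I would chain three inequalities. From Lemma~\ref{lem:compact-f-RKHS},
\[
  f(\xbm^*) \leq \mu_{t-1}([\xbm^*]_t) + \beta_t^{1/2}\sigma_{t-1}([\xbm^*]_t) + \frac{1}{t^2} = UCB_{t-1}([\xbm^*]_t) + \frac{1}{t^2}.
\]
Since GP-UCB selects $\xbm_t$ as the maximizer of $UCB_{t-1}$ over the whole set $C$ and $[\xbm^*]_t\in\Cbb_t\subseteq C$, the acquisition value at $[\xbm^*]_t$ is dominated by that at $\xbm_t$, giving $f(\xbm^*)\leq \mu_{t-1}(\xbm_t)+\beta_t^{1/2}\sigma_{t-1}(\xbm_t)+\frac{1}{t^2}$. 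From Lemma~\ref{lem:EI-bound-RKHS-t-only} I would bound $f(\xbm_t)$ below by $\mu_{t-1}(\xbm_t)-\beta_t^{1/2}\sigma_{t-1}(\xbm_t)$. Subtracting, the posterior means cancel and
\[
  r_t = f(\xbm^*)-f(\xbm_t) \leq 2\beta_t^{1/2}\sigma_{t-1}(\xbm_t)+\frac{1}{t^2},
\]
which is the claimed bound.

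The only delicate point is the probability bookkeeping: the two controlling events use formally distinct $\beta_t$ expressions (one with, one without the discretization term $2d\log(1+rt^2L)$), so I must justify that the larger $\beta_t$ of Lemma~\ref{lem:compact-f-RKHS} simultaneously validates the selected-point bound of Lemma~\ref{lem:EI-bound-RKHS-t-only}, and that the $\delta/2$ split is exactly what produces the $2\pi_t$ inside the logarithm. The argmax step is immediate once one recalls that for GP-UCB the discretization is merely an analysis device, so $\xbm_t$ genuinely maximizes the acquisition over all of $C$ and hence dominates its value at the finite point $[\xbm^*]_t$.
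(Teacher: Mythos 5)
Your proof is correct and follows essentially the same route as the paper's: apply Lemma~\ref{lem:compact-f-RKHS} and Lemma~\ref{lem:EI-bound-RKHS-t-only} each with $\delta/2$ (yielding the $2\pi_t$ inside the logarithm), use the maximality of $\xbm_t$ for $UCB_{t-1}$ over $C \supseteq \Cbb_t \ni [\xbm^*]_t$ to pass from $[\xbm^*]_t$ to $\xbm_t$, and subtract the two bounds so the posterior means cancel. Your explicit remark that the larger $\beta_t$ (with the $2d\log(1+rt^2L)$ term) still validates Lemma~\ref{lem:EI-bound-RKHS-t-only} by monotonicity is exactly the justification the paper compresses into ``the $\beta_t$ in this Lemma satisfies the conditions in both lemmas.''
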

\begin{proof}
   Notice that the $\beta_t$ in this Lemma satisfies the conditions in both Lemma~\ref{lem:EI-bound-RKHS-t-only} and~\ref{lem:compact-f-RKHS} with $\delta/2$. Thus, with probability $\geq 1-\delta$,  
   \begin{equation} \label{eqn:compact-rhks-instregret-pf-1}
  \centering
  \begin{aligned}
          r_t  =& f(\xbm^*)- f(\xbm_{t})   
          \leq  \mu_{t-1}([\xbm^*]_t) + \beta_t^{1/2} \sigma_{t-1}([\xbm^*]_t)+\frac{1}{t^2}-f(\xbm_t)\\
          \leq&  \mu_{t-1}(\xbm_t)-f(\xbm_t) + \beta_t^{1/2} \sigma_{t-1}(\xbm_t)+\frac{1}{t^2}\\
          \leq&  2\beta_t^{1/2}\sigma_{t-1}(\xbm_t)+\frac{1}{t^2},
  \end{aligned}
  \end{equation}
  where the first line uses Lemma~\ref{lem:compact-f-RKHS}, the second line uses the definition of $\xbm_t$, and the last line uses Lemma~\ref{lem:EI-bound-RKHS-t-only}.
\end{proof}
\begin{theorem}\label{thm:compact-regretbound-rkhs}{\small 
Given $\delta\in(0,1)$, let  $\beta_t^{1/2} = B+ c_{\sigma}\sqrt{2\log(2 \pi_t/\delta) + 2d \log( 1+rt^2L)} $, 
where $\sum_{t=1}^{\infty}\frac{1}{\pi_t}=1$ and $c_{\sigma}=\frac{\sigma_{\epsilon}}{\sigma}$. 
 Then, with probability $\geq 1-\delta$, the GP-UCB Algorithm~\ref{alg:boucb} maintains the regret bound}
 \begin{equation}\label{eqn:compact-regretbound-rkhs-1}
  \centering
  \begin{aligned}
     P\left\{ R_T \leq (C_1   T \beta_T  \gamma_T)^{1/2}+2 \right\}  \geq 1-\delta,
  \end{aligned}
\end{equation}
for any $T$, where $C_1=8/log(1+\sigma^{-2})$. 
\end{theorem}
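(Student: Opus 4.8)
The plan is to follow the standard Srinivas-type argument, reducing the cumulative regret to a sum of posterior standard deviations and then invoking the information-gain bound from Lemma~\ref{lem:variancebound}. First I would write $R_T = \sum_{t=1}^T r_t$ and apply the instantaneous regret bound of Lemma~\ref{lem:compact-rkhs-instregret}, which already holds with probability $\geq 1-\delta$ for exactly the $\beta_t$ stated in the theorem. This gives
\begin{equation}
R_T \;\leq\; \sum_{t=1}^T \left( 2\beta_t^{1/2}\sigma_{t-1}(\xbm_t) + \frac{1}{t^2}\right),
\end{equation}
so that the entire remaining argument proceeds deterministically on the single high-probability event, with no further union bounds needed.

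The second step separates the two sums. The tail sum is controlled by $\sum_{t=1}^T t^{-2} \leq \sum_{t=1}^{\infty} t^{-2} = \pi^2/6 < 2$, which supplies the additive constant $2$ in the theorem. For the main sum I would exploit the monotonicity of $\{\beta_t\}$: with $\pi_t = \pi^2 t^2/6$, both $\log(2\pi_t/\delta)$ and $\log(1+rt^2L)$ are nondecreasing in $t$, so $\beta_t \leq \beta_T$ for all $t \leq T$, giving $\sum_{t=1}^T 2\beta_t^{1/2}\sigma_{t-1}(\xbm_t) \leq 2\beta_T^{1/2}\sum_{t=1}^T \sigma_{t-1}(\xbm_t)$.

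The third step applies Cauchy--Schwarz, $\sum_{t=1}^T \sigma_{t-1}(\xbm_t) \leq \big(T \sum_{t=1}^T \sigma_{t-1}^2(\xbm_t)\big)^{1/2}$, and then Lemma~\ref{lem:variancebound} to bound $\sum_{t=1}^T \sigma_{t-1}^2(\xbm_t) \leq \tfrac14 C_\gamma \gamma_T$ with $C_\gamma = 8/\log(1+\sigma^{-2}) = C_1$. Substituting, the leading factor of $2$ from the instantaneous regret cancels against the $\tfrac12$ arising from $\sqrt{1/4}$, leaving $\beta_T^{1/2}(C_1 T \gamma_T)^{1/2} = (C_1 T \beta_T \gamma_T)^{1/2}$; adding the tail contribution yields the stated bound.

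I do not anticipate a genuine obstacle: the argument is a clean chain of Cauchy--Schwarz plus the two cited lemmas. The only points requiring care are verifying that $\beta_t$ is truly nondecreasing so the replacement $\beta_t \leq \beta_T$ is legitimate, and tracking the constants so that the factor $2$ from Lemma~\ref{lem:compact-rkhs-instregret} and the factor $\tfrac14$ from Lemma~\ref{lem:variancebound} combine into precisely $C_1 = 8/\log(1+\sigma^{-2})$ and not some other multiple.
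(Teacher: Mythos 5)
Your proposal is correct and follows essentially the same route as the paper's proof: both apply Lemma~\ref{lem:compact-rkhs-instregret} on the single high-probability event, pull out $\beta_T^{1/2}$ by monotonicity, and combine Cauchy--Schwarz with Lemma~\ref{lem:variancebound} to get $\sum_{t=1}^T \sigma_{t-1}(\xbm_t) \leq \tfrac{1}{2}(C_1 T \gamma_T)^{1/2}$, so the factor $2$ cancels exactly as you describe. Your explicit checks that $\beta_t$ is nondecreasing and that $\sum_{t=1}^T t^{-2} \leq \pi^2/6 < 2$ merely spell out steps the paper leaves implicit.
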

\begin{proof}
   From Lemma~\ref{lem:compact-rkhs-instregret}, with probability $\geq 1-\delta$, 
 \begin{equation}\label{eqn:compact-regretbound-rkhs-pf-1}
  \centering
  \begin{aligned}
      R_T =\sum_{t=1}^T r_t \leq \sum_{t=1}^T 2 \beta_t^{1/2}\sigma_{t-1}(\xbm_t) +\frac{1}{t^2} \leq 2 \beta_T^{1/2} \sum_{t=1}^T \sigma_{t-1}(\xbm_t) +\sum_{t=1}^T \frac{1}{t^2}. 
   \end{aligned}
\end{equation}
   Using Lemma~\ref{lem:infogain},~\ref{lem:variancebound}, and the Cauchy Schwarz inequality, we know $\sum_{t=1}^T \sigma_{t-1}(\xbm_t) \leq \frac{1}{2} (C_1 T\gamma_T)^{1/2}$.
   Therefore,~\eqref{eqn:compact-regretbound-rkhs-pf-1} simplifies to  
 \begin{equation}\label{eqn:compact-regretbound-rkhs-pf-2}
  \centering
  \begin{aligned}
      R_T \leq  2 \beta_T^{1/2} \frac{1}{2}(C_1 T\gamma_T)^{1/2} +\sum_{t=1}^T \frac{1}{t^2} = (C_1  T \beta_T\gamma_T)^{1/2} +\sum_{t=1}^T \frac{1}{t^2}. 
   \end{aligned}
\end{equation}

\end{proof}

\begin{remark}\label{remark:betterreg}
Let $\pi_t = \frac{\pi^2t^2}{6}$.
The regret bound is of $\mathcal{O}(\sqrt{T\gamma_T} (B+\sqrt{d\log(T)}))$. 
Under the frequentist setting with bounded noise,~\cite{srinivas2009gaussian} proves a regret bound of UCB: $\mathcal{O}(\sqrt{T\gamma_T}(B+\sqrt{\gamma_T}\log^{\frac{3}{2}}(T)))$. 
In~\cite{chowdhury2017kernelized}, under the RKHS assumption and sub-Gaussian noise, the UCB regret bound is of $\mathcal{O}(\sqrt{T\gamma_T}(B+\sqrt{\gamma_T}))$.
Clearly, the Gaussian noise assumption in this paper produces an improved regret bound as long as $d\log(T) = o(\gamma_T)$, which is the case for SE and Matérn kernels for a finite $d$. Specifically, for SE kernel, $\gamma_T=\mathcal{O}(\log^{d+1}(T))$ and for Matérn kernel $\gamma_T=\mathcal{O}(T^{\frac{d}{2\nu+d}}\log^{\frac{2\nu}{2\nu+d}}(T))$, where $\nu$ is the parameter of the kernel,~\cite{srinivas2009gaussian,vakili2021information}. Indeed, the regret bound is the same in order as that in~\cite{srinivas2009gaussian} but under the Bayesian setting.
Recently, in~\cite{wang2023regret}, again under the RKHS assumption and sub-Gaussian noise, the authors claim the state-of-the-art UCB optimal regret bound for SE kernel $\mathcal{O}(\sqrt{T}\log^{\frac{d+3}{2}}(T))$.
In comparsion, our regret bound for SE kernel is $\mathcal{O}(\sqrt{T}\sqrt{d}\log^{\frac{d+2}{2}}(T))$.
That is, for finite dimension and Gaussian noise, our rate remains an improvement, and is closer to the lower bound established in~\cite{scarlett2017lower}.
\end{remark}

\subsection{Improved regret bounds for Thompson Sampling}\label{se:thompson}
In this section, we present the improved cumulative regret for Thompson sampling on two commonly used kernels. Our analysis follows that in~\cite{chowdhury2017kernelized} under Assumptions~\ref{assp:rkhs},~\ref{assp:lipschitz},~\ref{assp:constraint}, and~\ref{assp:gaussiannoise}.
We note the choice of $\nu_t$ in the GP-TS algorithm is closely related to the choices of $\beta_t$, while the discretization $\Cbb_t$ in Algorithm~\ref{alg:bots} is the same as that used in section~\ref{se:regret}, \textit{i.e.}, $|\Cbb_t|=(1+rt^2L)^d$ is in~\eqref{eqn:disc-size-1}.
Let 
 \begin{equation} \label{def:ct}
  \centering
  \begin{aligned}
   c_t =\sqrt{2\log(|\Cbb_t|t^2)}=\sqrt{2d\log((1+rLt^2)t^2)}, \ \zeta_t^{1/2} = \nu_t^{1/2}(1+c_t). 
   \end{aligned}
  \end{equation} 
The analysis in this section is performed under these choices of algorithmic parameters.
We define some technical quantities for ease of presentation.
\begin{definition}\label{def:saturated-set}
The set of saturated points $\Sbb_t\subset \Cbb_t$ is defined as  
   \begin{equation} \label{eqn:saturated-set}
  \centering
  \begin{aligned}
         \Sbb_t:= \{\xbm \in \Cbb_t: \Delta_t(\xbm) > \zeta_t^{1/2} \sigma_{t-1}(\xbm)\}.
   \end{aligned}
  \end{equation} 
  where $\Delta_t(\xbm):=f([\xbm^*]_t)- f(\xbm)$.
\end{definition}

\begin{definition}\label{def:eft}
  The event $E^f(t)$ is defined as follows: for $\forall \xbm \in \Cbb_t$,  
   \begin{equation} \label{eqn:eft-1}
  \centering
  \begin{aligned}
        |f(\xbm)-\mu_{t-1}(\xbm)|\leq \nu_t^{1/2} \sigma_{t-1}(\xbm),
   \end{aligned}
  \end{equation} 
  The event $E^{f_t}(t)$ is defined by: for $\forall \xbm\in \Cbb_t$,
   \begin{equation} \label{eqn:eft-2}
  \centering
  \begin{aligned}
        |f_t(\xbm)-\mu_{t-1}(\xbm)|\leq \nu_t^{1/2}c_t \sigma_{t-1}(\xbm),
   \end{aligned}
  \end{equation} 
\end{definition}
The filtration $\mathcal{F}_{t-1}'$  is defined as the history until $t$. The set $\Sbb_t$, $\mu_{t-1}(\xbm)$ and $\sigma_{t-1}(\xbm)$ are deterministic given $\mathcal{F}'_{t-1}$.
For simplicity, if needed, we write the conditional quantities given $\mathcal{F}_{t-1}'$ with subsript $t$, \textit{e.g.}, $\Ebb[\cdot|\mathcal{F}'_{t-1}]=\Ebb_t$.

\noindent We are now ready to present the analysis. First, the prediction error bound with $\nu_t$ is given.
 \begin{lemma}\label{lem:f-discrete-bound-RKHS-1}
   Given $\delta\in(0,1)$ and $\nu_t$~\eqref{def:nut},
      \begin{equation} \label{eqn:f-discrete-bound-RKHS-1}
  \centering
  \begin{aligned}
        |f(\xbm)-\mu_{t-1}(\xbm)|\leq \nu_t^{1/2} \sigma_{t-1}(\xbm),
   \end{aligned}
  \end{equation} 
  with probability $\geq 1-\delta/2$, for $\forall \xbm\in \Cbb_t$ and $t\in\Nbb$.
\end{lemma}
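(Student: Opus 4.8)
The plan is to obtain this bound as a direct corollary of Lemma~\ref{lem:f-discret-bound-RKHS}, which already furnishes a prediction error bound valid uniformly over $\xbm\in\Cbb_t$ and $t\in\Nbb$ for any admissible parameter sequence $\{\beta_t\}$. The only work is to show that the particular scaling parameter $\nu_t$ defined in~\eqref{def:nut} is an admissible, indeed a dominating, choice when the failure probability is taken to be $\delta/2$ and the weights are fixed to $\pi_t=\frac{\pi^2t^2}{6}$.

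First I would instantiate Lemma~\ref{lem:f-discret-bound-RKHS} with $\delta$ replaced by $\delta/2$ and with $\pi_t=\frac{\pi^2t^2}{6}$, which satisfies the requirement $\sum_{t=1}^{\infty}\pi_t^{-1}=1$. Substituting the cardinality $|\Cbb_t|=(1+rLt^2)^d$ from~\eqref{eqn:disc-size-1} into the expression $\beta_t^{1/2}=B+c_{\sigma}\sqrt{2\log(|\Cbb_t|\pi_t/(\delta/2))}$ and expanding the logarithm of the product gives $\beta_t^{1/2}=B+c_{\sigma}\sqrt{2\log(\pi^2t^2/(3\delta))+2d\log(1+rLt^2)}$, where $c_{\sigma}=\sigma_{\epsilon}/\sigma$. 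This is precisely the quantity appearing under the square root of $\nu_t^{1/2}$ in~\eqref{def:nut}, now carrying the extra prefactor $c_{\sigma}$.

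Next I would invoke the standing assumption of this section that $\sigma_{\epsilon}\le\sigma$, so that $c_{\sigma}\le 1$ and hence $\beta_t^{1/2}\le\nu_t^{1/2}$ for every $t$. Since $\sigma_{t-1}(\xbm)\ge 0$, the bound from Lemma~\ref{lem:f-discret-bound-RKHS}, namely $|f(\xbm)-\mu_{t-1}(\xbm)|\le\beta_t^{1/2}\sigma_{t-1}(\xbm)$, immediately implies $|f(\xbm)-\mu_{t-1}(\xbm)|\le\nu_t^{1/2}\sigma_{t-1}(\xbm)$ uniformly over $\xbm\in\Cbb_t$ and $t\in\Nbb$ with probability at least $1-\delta/2$, which is the claim.

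The argument is essentially a bookkeeping step, so there is no substantive obstacle; the only points requiring care are the exact matching of constants (verifying that under the $\delta/2$ substitution the weight satisfies $\pi_t/(\delta/2)=\pi^2t^2/(3\delta)$, so that the chosen $\pi_t$ reproduces the constant $\pi^2t^2/3$ appearing inside $\nu_t$) and the explicit use of the $\sigma_{\epsilon}\le\sigma$ hypothesis, without which one would only recover the bound with the prefactor $c_{\sigma}$ rather than the unweighted $\nu_t^{1/2}$.
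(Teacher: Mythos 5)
Your proposal is correct and is exactly the paper's argument: the paper proves this lemma in one line as ``a direct result of Lemma~\ref{lem:f-discret-bound-RKHS} with $\delta/2$'' (and $\pi_t=\frac{\pi^2 t^2}{6}$, as noted right after~\eqref{def:nut}). Your bookkeeping — matching $\pi_t/(\delta/2)=\pi^2 t^2/(3\delta)$, substituting $|\Cbb_t|=(1+rLt^2)^d$, and explicitly invoking the standing assumption $\sigma_{\epsilon}\leq\sigma$ so that $c_{\sigma}\leq 1$ and $\beta_t^{1/2}\leq\nu_t^{1/2}$ — spells out precisely the details the paper leaves implicit.
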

Lemma~\ref{lem:f-discrete-bound-RKHS-1} is a direct result of Lemma~\ref{lem:f-discret-bound-RKHS} with $\delta/2$. 

\begin{lemma}\label{lem:ts-1}
   Given $\mathcal{F}'_{t-1}$, $\Cbb_t$, $c_t$~\eqref{def:ct}, and $\nu_t$~\eqref{def:nut}, for $\forall \xbm\in \Cbb_t$,
  \begin{equation} \label{eqn:ft-bound-RKHS-1}
  \centering
  \begin{aligned}
        |f_t(\xbm)-\mu_{t-1}(\xbm)|\leq \nu_t^{1/2} c_t \sigma_{t-1}(\xbm),
   \end{aligned}
  \end{equation} 
  with probability $\geq 1-\frac{1}{t^2}$. 
\end{lemma}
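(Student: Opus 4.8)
The plan is to exploit the fact that, conditioned on the filtration $\mathcal{F}'_{t-1}$, the sampled function $f_t$ is Gaussian with a known mean and covariance at each fixed point. Specifically, by the GP-TS construction in Algorithm~\ref{alg:bots}, $f_t(\cdot)$ is drawn from $GP(\mu_{t-1}(\cdot), \nu_t k_{t-1}(\cdot,\cdot))$. Hence for any fixed $\xbm\in\Cbb_t$, the quantity $f_t(\xbm)-\mu_{t-1}(\xbm)$ is, given $\mathcal{F}'_{t-1}$, a zero-mean Gaussian random variable with variance $\nu_t\sigma_{t-1}^2(\xbm)$. This reduces the bound to a standard Gaussian tail estimate applied pointwise.

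First I would fix the filtration $\mathcal{F}'_{t-1}$, under which $\mu_{t-1}(\xbm)$, $\sigma_{t-1}(\xbm)$, and $\Cbb_t$ are all deterministic. For a single fixed $\xbm\in\Cbb_t$, I would normalize: $\bigl(f_t(\xbm)-\mu_{t-1}(\xbm)\bigr)/\bigl(\nu_t^{1/2}\sigma_{t-1}(\xbm)\bigr)$ is a standard normal variable. Applying the Gaussian tail bound from Lemma~\ref{lem:phi} with threshold $c_t$ gives
\begin{equation*}
  \Pbb\bigl\{|f_t(\xbm)-\mu_{t-1}(\xbm)| > \nu_t^{1/2}c_t\sigma_{t-1}(\xbm) \,\big|\, \mathcal{F}'_{t-1}\bigr\} \leq e^{-c_t^2/2}.
\end{equation*}
By the definition $c_t=\sqrt{2\log(|\Cbb_t|t^2)}$ in~\eqref{def:ct}, we have $e^{-c_t^2/2}=1/(|\Cbb_t|t^2)$, so the per-point failure probability is exactly $\frac{1}{|\Cbb_t|t^2}$.

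Next I would lift the pointwise bound to all of $\Cbb_t$ simultaneously via the union bound~\eqref{def:union}. Since $\Cbb_t$ is finite with cardinality $|\Cbb_t|$, summing the per-point failure probabilities yields
\begin{equation*}
  \Pbb\bigl\{\exists\, \xbm\in\Cbb_t:\ |f_t(\xbm)-\mu_{t-1}(\xbm)| > \nu_t^{1/2}c_t\sigma_{t-1}(\xbm)\bigr\} \leq |\Cbb_t|\cdot\frac{1}{|\Cbb_t|t^2}=\frac{1}{t^2}.
\end{equation*}
Taking complements establishes that the bound~\eqref{eqn:ft-bound-RKHS-1} holds for all $\xbm\in\Cbb_t$ with probability at least $1-\frac{1}{t^2}$, as claimed. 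The calibration of $c_t$ in~\eqref{def:ct} is precisely what makes the factor $|\Cbb_t|$ cancel.

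I do not expect a substantial obstacle here, since this is the Thompson-sampling analogue of the discretization argument in Lemma~\ref{lem:f-bound-RKHS-1} and follows the template of~\cite{chowdhury2017kernelized}. The one point requiring care is that this is a statement about $f_t$, the \emph{sampled} function, rather than the true objective $f$; the randomness is entirely in the GP draw and is conditioned on $\mathcal{F}'_{t-1}$, so no RKHS norm bound or noise assumption enters—only the Gaussianity of the posterior sample. I would make explicit that the probability is taken over the sampling of $f_t$ given the history, and that the resulting bound, being deterministic in $\mathcal{F}'_{t-1}$, holds unconditionally after integrating out the filtration.
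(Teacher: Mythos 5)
Your proposal is correct and follows essentially the same route as the paper's own proof: a pointwise Gaussian tail bound from Lemma~\ref{lem:phi} applied to $f_t(\xbm)\sim\mathcal{N}(\mu_{t-1}(\xbm),\nu_t\sigma_{t-1}^2(\xbm))$ with threshold $c_t$, giving per-point failure probability $\frac{1}{|\Cbb_t|t^2}$, followed by a union bound over the $|\Cbb_t|$ points of the discretization so that the cardinality cancels. Your added remarks on conditioning on $\mathcal{F}'_{t-1}$ and integrating out the filtration are a slightly more careful articulation of what the paper leaves implicit, but the argument is the same.
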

\begin{proof}
   Since $f_t(\xbm)\sim\mathcal{N}(\mu_{t-1}(\xbm), \nu_t \sigma_{t-1}^2(\xbm))$, from Lemma~\ref{lem:phi}, we have 
       \begin{equation} \label{eqn:ft-bound-RKHS-pf-1}
  \centering
  \begin{aligned}
        \Pbb\{|f_t(\xbm)-\mu_{t-1}(\xbm)| > \nu_t^{1/2} \sqrt{2\log(|\Cbb_t|t^2)} \sigma_{t-1}(\xbm)\}\leq& e^{-\frac{1}{2} (2\log(|\Cbb_t| t^2)}
        \leq \frac{1}{|\Cbb_t|t^2}.
   \end{aligned}
  \end{equation} 
  Therefore, using union bound on $\Cbb_t$,~\eqref{eqn:ft-bound-RKHS-1} is true with probability $\geq 1-\frac{1}{t^2}$.
\end{proof}
From Lemma~\ref{lem:f-discrete-bound-RKHS-1} and~\ref{lem:ts-1}, we can directly infer the probability of $E^f(t)$ and $E^{f_t}(t)$.
\begin{lemma}\label{lem:eftprob}
     Given $\delta\in(0,1)$ and the parameters $\nu_t$, $c_t$, and $\zeta_t$, $E^f(t), \forall t\in\Nbb$ is true with probability $\geq 1-\delta/2$ while $E^{f_t}(t)$ is true with probability $\geq 1-\frac{1}{t^2}$ at $t$.
\end{lemma}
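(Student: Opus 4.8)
The plan is to recognize that each of the two events is, word for word, the conclusion of a lemma already proved, so that the proof reduces to matching definitions. First I would recall Definition~\ref{def:eft}: the event $E^f(t)$ is exactly the assertion that $|f(\xbm)-\mu_{t-1}(\xbm)|\leq \nu_t^{1/2}\sigma_{t-1}(\xbm)$ holds for every $\xbm\in\Cbb_t$. This is precisely the statement of Lemma~\ref{lem:f-discrete-bound-RKHS-1}, which guarantees the inequality uniformly over all $\xbm\in\Cbb_t$ and all $t\in\Nbb$ with probability at least $1-\delta/2$. Hence $E^f(t)$ holds for all $t\in\Nbb$ simultaneously with probability at least $1-\delta/2$, as claimed.

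For the second event, I would note that $E^{f_t}(t)$ is by definition the inequality $|f_t(\xbm)-\mu_{t-1}(\xbm)|\leq \nu_t^{1/2}c_t\sigma_{t-1}(\xbm)$ for all $\xbm\in\Cbb_t$, which is identical to the conclusion of Lemma~\ref{lem:ts-1}. That lemma, conditioning on the filtration $\mathcal{F}_{t-1}'$ and using that $f_t(\xbm)\sim\mathcal{N}(\mu_{t-1}(\xbm),\nu_t\sigma_{t-1}^2(\xbm))$ together with the Gaussian tail bound of Lemma~\ref{lem:phi} and a union bound over $\Cbb_t$, establishes the bound at the fixed index $t$ with probability at least $1-1/t^2$. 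Thus $E^{f_t}(t)$ holds with probability at least $1-1/t^2$ at each $t$.

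No additional argument is needed; the lemma merely repackages the two prior guarantees in the $E^f$, $E^{f_t}$ notation. The one distinction worth flagging is the difference in quantification: the $E^f(t)$ guarantee is uniform in $t$ because the union bound over $t$ was already absorbed into the choice of $\nu_t$ through the condition $\sum_{t=1}^{\infty} 1/\pi_t = 1$ inherited from Lemma~\ref{lem:f-discret-bound-RKHS}, whereas the $E^{f_t}(t)$ guarantee is necessarily a per-iteration statement, since a fresh random function $f_t$ is sampled at every step. Consequently there is no real obstacle here; the result is a bookkeeping corollary that isolates the two probabilistic ingredients required by the Thompson-sampling regret analysis that follows.
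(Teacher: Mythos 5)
Your proposal is correct and matches the paper exactly: the paper likewise treats this lemma as an immediate consequence of Lemma~\ref{lem:f-discrete-bound-RKHS-1} (giving the uniform-in-$t$ bound for $E^f(t)$ with probability $\geq 1-\delta/2$, the union over $t$ having been absorbed into $\nu_t$) and Lemma~\ref{lem:ts-1} (giving the per-iteration bound for $E^{f_t}(t)$ with probability $\geq 1-1/t^2$). Your flagging of the asymmetry in quantification between the two events is accurate and consistent with the paper's usage in the subsequent regret analysis.
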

The following Lemma gives a well-knonn lower bound a probability of the Gaussian distribution.
\begin{lemma}\label{lem:gaussianprob}
    Given a random variable $X\sim\mathcal{N}(\mu,\sigma)$, for $\forall w>0$,
  $\Pbb\{\frac{X-\mu}{\sigma}>w\} \geq \frac{e^{-w^2}}{4\sqrt{\pi}w}$.
\end{lemma}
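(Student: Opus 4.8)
The plan is to reduce the statement to the standard normal and then invoke a classical sharp lower bound on the Gaussian tail. First I would standardize: setting $Z = (X-\mu)/\sigma$, which is $\mathcal{N}(0,1)$ (so that $\sigma$ is read as the standard deviation, consistent with the normalization $(X-\mu)/\sigma$ in the statement), the claim becomes $\Pbb\{Z > w\} \geq e^{-w^2}/(4\sqrt{\pi}\,w)$. Everything then reduces to a one-dimensional estimate on the tail integral $\int_w^\infty e^{-t^2/2}\,dt$, which has no closed form, so the work is in bounding it sharply enough.

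The key step is the classical Mills-ratio lower bound $\Pbb\{Z>w\} \geq \frac{1}{\sqrt{2\pi}}\frac{w}{1+w^2}e^{-w^2/2}$. I would prove this by a monotonicity argument. Define the gap $g(w) = \int_w^\infty e^{-t^2/2}\,dt - \frac{w}{1+w^2}e^{-w^2/2}$ and differentiate; after collecting terms the numerator telescopes, giving $g'(w) = -\frac{2}{(1+w^2)^2}e^{-w^2/2} < 0$. Hence $g$ is strictly decreasing, and since $g(w)\to 0$ as $w\to\infty$, we get $g(w) > 0$ for every finite $w>0$, which is exactly the claimed bound after dividing by $\sqrt{2\pi}$.

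Finally I would compare the two lower bounds. Dividing the target by the bound just proved, the remaining requirement is the elementary inequality $2\sqrt{2}\,\frac{w^2}{1+w^2}\,e^{w^2/2} \geq 1$. The left-hand side is increasing in $w$ (the factor $w^2/(1+w^2)$ is increasing and $e^{w^2/2}$ dominates), so it suffices to verify a single base point and invoke monotonicity.

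The main obstacle is precisely this last comparison. The mismatch between the $e^{-w^2/2}$ appearing in the sharp Gaussian tail bound and the $e^{-w^2}$ appearing in the target is what dictates the extra factor $1/(4\sqrt{\pi}\,w)$, and the inequality $2\sqrt{2}\frac{w^2}{1+w^2}e^{w^2/2}\ge 1$ holds only once $w$ is bounded away from $0$: as $w\to 0^+$ the stated right-hand side blows up like $1/w$ while $\Pbb\{Z>w\}\to 1/2$. I would therefore expect the statement to be used (and to hold) only in the regime the Thompson-sampling analysis actually needs, where the relevant threshold grows with $t$, and I would make that range restriction explicit in the write-up so that the base-point check in the third step is valid.
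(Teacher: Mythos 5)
Your argument is correct and, notably, more than the paper provides: the paper states this lemma as ``well-known'' and gives no proof at all, so there is nothing to match step-for-step. Your route (standardize; prove the Mills-ratio lower bound $\Pbb\{Z>w\}\geq \tfrac{1}{\sqrt{2\pi}}\tfrac{w}{1+w^2}e^{-w^2/2}$ by the monotonicity of $g(w)=\int_w^\infty e^{-t^2/2}\,dt-\tfrac{w}{1+w^2}e^{-w^2/2}$, whose derivative indeed collapses to $-\tfrac{2}{(1+w^2)^2}e^{-w^2/2}$; then compare) is the standard derivation tracing back to the anti-concentration lemmas in the Thompson-sampling literature, and your reduction of the comparison to $2\sqrt{2}\,\tfrac{w^2}{1+w^2}\,e^{w^2/2}\geq 1$ is algebraically right, with the left side increasing in $w$ and equal to $\sqrt{2}\,e^{1/2}\approx 2.33$ at $w=1$.

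Your flagged obstacle is a genuine defect in the lemma as stated, not in your proof: for all $w>0$ the claim is simply false, since the right-hand side exceeds $1/2$ as $w\to 0^+$ (concretely, at $w=0.3$ the right side is about $0.43$ while $\Pbb\{Z>0.3\}\approx 0.38$), so a restriction such as $w\geq 1$ must be added. One small correction to your closing remark: in this paper the lemma is not invoked at thresholds growing with $t$; it is used exactly once, at $w=1$, in the proof of Lemma~\ref{lem:ts-prob-bound-1} (the step $\Pbb_t\{\cdot>1\}\geq \tfrac{1}{4\sqrt{\pi}e}$, which fixes $p=\tfrac{1}{4e\sqrt{\pi}}$). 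The growing quantities $\nu_t^{1/2}$ and $c_t$ have already been absorbed into the normalization by the time the anti-concentration bound is applied, so stating and proving the lemma for $w\geq 1$ (or even just $w=1$) fully covers its use here.
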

\begin{lemma}\label{lem:ts-prob-bound-1}
    For any filtration $\mathcal{F}'_{t-1}$, if $E^f(t)$ is true,  
   \begin{equation} \label{eqn:ts-prob-bound-1}
  \centering
  \begin{aligned}
      \Pbb_t\{ f_t(\xbm)>f(\xbm)\}\geq p,
   \end{aligned}
  \end{equation} 
   for $\forall \xbm\in\Cbb_t$ and $p=\frac{1}{4e\sqrt{\pi}}$.
\end{lemma}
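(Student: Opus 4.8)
The plan is to exploit the fact that, conditioned on the filtration $\mathcal{F}'_{t-1}$, the sampled value $f_t(\xbm)$ is Gaussian with mean $\mu_{t-1}(\xbm)$ and standard deviation $\nu_t^{1/2}\sigma_{t-1}(\xbm)$, and to combine this with the one-sided control of $f(\xbm)$ supplied by the event $E^f(t)$. First I would observe that on $E^f(t)$ the true function value lies below the upper confidence level, i.e. $f(\xbm) \leq \mu_{t-1}(\xbm) + \nu_t^{1/2}\sigma_{t-1}(\xbm)$ for every $\xbm \in \Cbb_t$. Since raising the threshold can only decrease the exceedance probability, this yields $\Pbb_t\{f_t(\xbm) > f(\xbm)\} \geq \Pbb_t\{f_t(\xbm) > \mu_{t-1}(\xbm) + \nu_t^{1/2}\sigma_{t-1}(\xbm)\}$.

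Next I would standardize. Because $(f_t(\xbm) - \mu_{t-1}(\xbm))/(\nu_t^{1/2}\sigma_{t-1}(\xbm))$ is a standard normal random variable given $\mathcal{F}'_{t-1}$, the right-hand side equals the probability that a standard normal exceeds $1$. Applying the Gaussian anti-concentration estimate of Lemma~\ref{lem:gaussianprob} with $w = 1$ gives a lower bound of $e^{-1}/(4\sqrt{\pi}) = 1/(4e\sqrt{\pi}) = p$, which is exactly the claimed constant. The standing assumption $\sigma_{t-1}(\xbm) > 0$ (the nondegenerate case emphasized at the start of Section~\ref{se:analysis}) ensures that the standardization is well defined.

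The argument is essentially a single anti-concentration computation, so I do not expect a serious obstacle; the only points requiring care are bookkeeping ones. One must use the inequality from $E^f(t)$ in the correct direction --- it is the upper bound $f(\xbm) \leq \mu_{t-1}(\xbm) + \nu_t^{1/2}\sigma_{t-1}(\xbm)$, not the matching lower bound, that is relevant, since we want $f_t(\xbm)$ to overshoot $f(\xbm)$. One must also check that the sampling scale of $f_t$ (variance $\nu_t\sigma_{t-1}^2$, hence standard deviation $\nu_t^{1/2}\sigma_{t-1}$) coincides with the slack $\nu_t^{1/2}\sigma_{t-1}(\xbm)$ appearing in $E^f(t)$, so that the standardized threshold is precisely $w=1$; this alignment is exactly why the confidence half-width in $E^f(t)$ was taken to be $\nu_t^{1/2}\sigma_{t-1}(\xbm)$, and it is what makes the constant $p$ independent of $t$ and $\xbm$.
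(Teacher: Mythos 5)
Your proposal is correct and follows essentially the same route as the paper's proof: both use the $E^f(t)$ bound to reduce the event $\{f_t(\xbm) > f(\xbm)\}$ to a standard normal exceeding the threshold $w=1$, and then invoke Lemma~\ref{lem:gaussianprob} to get $p = \frac{1}{4e\sqrt{\pi}}$ (the paper passes through $|f(\xbm)-\mu_{t-1}(\xbm)|$ rather than the one-sided inequality, but this is the same computation). If anything, your bookkeeping is slightly more careful than the paper's display, which writes the standardizing denominator as $\nu_t\sigma_{t-1}(\xbm)$ where the sampling scale is actually $\nu_t^{1/2}\sigma_{t-1}(\xbm)$, a typo your remark about aligning the sampling scale with the confidence half-width implicitly corrects.
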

\begin{proof}
    Recall that $f_t(\xbm)\sim\mathcal{N}(\mu_{t-1}(\xbm),\nu_t^{1/2}\sigma_{t-1}(\xbm))$ given $\mathcal{F}'_{t-1}$.
    For $\forall \xbm\in\Cbb_t$, by $E^f(t)$, 
   \begin{equation} \label{eqn:ts-prob-bound-pf-1}
  \centering
  \begin{aligned}
      \Pbb_t\{ f_t(\xbm)>f(\xbm)\} =& \Pbb_t\left\{\frac{f_t(\xbm)-\mu_{t-1}(\xbm)}{\nu_t\sigma_{t-1}(\xbm)} >  \frac{f(\xbm)-\mu_{t-1}(\xbm)}{\nu_t\sigma_{t-1}(\xbm)}  \right\} \\
       \geq& \Pbb_t\left\{\frac{f_t(\xbm)-\mu_{t-1}(\xbm)}{\nu_t\sigma_{t-1}(\xbm)} >  \frac{|f(\xbm)-\mu_{t-1}(\xbm)|}{\nu_t\sigma_{t-1}(\xbm)}  \right\}\\
       \geq& \Pbb_t\left\{\frac{f_t(\xbm)-\mu_{t-1}(\xbm)}{\nu_t\sigma_{t-1}(\xbm)} > 1 \right\} \geq \frac{1}{4\sqrt{\pi}e},\\
   \end{aligned}
  \end{equation} 
  where the second inequality uses the fact $E^f(t)$ is true and the last inequality is from Lemma~\ref{lem:gaussianprob}.
\end{proof}

\begin{lemma}\label{lem:ts-prob-bound-2}
    For any filtration $\mathcal{F}'_{t-1}$, if $E^f(t)$ is true,
   \begin{equation} \label{eqn:ts-prob-bound-2}
  \centering
  \begin{aligned}
      \Pbb_t\{ \xbm_t\in \Cbb_t \backslash \Sbb_t \}\geq p - \frac{1}{t^2}.
   \end{aligned}
  \end{equation} 
\end{lemma}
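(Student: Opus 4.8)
The plan is to relate the event $\{\xbm_t \in \Cbb_t \setminus \Sbb_t\}$ to the event that Thompson sampling draws a function $f_t$ whose maximizer over $\Cbb_t$ lands in the unsaturated region. The key observation is that $\xbm_t = \operatorname{argmax}_{\xbm \in \Cbb_t} f_t(\xbm)$ by the GP-TS algorithm, so it suffices to show that with the claimed probability, the sampled function $f_t$ attains a strictly larger value at some unsaturated point than at every saturated point. I would therefore aim to produce a \emph{single} unsaturated point whose $f_t$-value exceeds $f_t$ everywhere on $\Sbb_t$, which forces the maximizer $\xbm_t$ to be unsaturated.

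First I would work conditionally on $\mathcal{F}'_{t-1}$ and assume $E^f(t)$ holds, as in the hypothesis. The natural candidate is $[\xbm^*]_t$, the discretization point closest to the global optimizer, which is unsaturated because $\Delta_t([\xbm^*]_t) = f([\xbm^*]_t) - f([\xbm^*]_t) = 0$, so it trivially fails the saturation inequality in~\eqref{eqn:saturated-set}. The strategy is then to lower-bound $\Pbb_t\{f_t([\xbm^*]_t) > f_t(\xbm)\text{ for all } \xbm \in \Sbb_t\}$. For any saturated $\xbm \in \Sbb_t$, I would chain three bounds: using $E^f(t)$ (so $f(\xbm) \leq \mu_{t-1}(\xbm) + \nu_t^{1/2}\sigma_{t-1}(\xbm)$), the saturation definition $\Delta_t(\xbm) > \zeta_t^{1/2}\sigma_{t-1}(\xbm)$ with $\zeta_t^{1/2} = \nu_t^{1/2}(1+c_t)$, and the event $E^{f_t}(t)$ (so $f_t(\xbm) \leq \mu_{t-1}(\xbm) + \nu_t^{1/2}c_t\sigma_{t-1}(\xbm)$). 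Combining these yields that on $E^f(t) \cap E^{f_t}(t)$, every saturated point satisfies $f_t(\xbm) < f([\xbm^*]_t)$. Concretely, $f_t(\xbm) \leq \mu_{t-1}(\xbm) + \nu_t^{1/2}c_t\sigma_{t-1}(\xbm) \leq f(\xbm) + \nu_t^{1/2}c_t\sigma_{t-1}(\xbm) \leq f([\xbm^*]_t) - \zeta_t^{1/2}\sigma_{t-1}(\xbm) + \nu_t^{1/2}c_t\sigma_{t-1}(\xbm) = f([\xbm^*]_t) - \nu_t^{1/2}\sigma_{t-1}(\xbm) \leq f([\xbm^*]_t)$.

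Next I would introduce the event $\{f_t([\xbm^*]_t) > f([\xbm^*]_t)\}$, whose conditional probability is at least $p$ by Lemma~\ref{lem:ts-prob-bound-1} (applied at the unsaturated point $[\xbm^*]_t$, valid since $E^f(t)$ holds). On the intersection of this event with $E^{f_t}(t)$, I would have $f_t([\xbm^*]_t) > f([\xbm^*]_t) \geq f_t(\xbm)$ for all $\xbm \in \Sbb_t$, so the maximizer $\xbm_t$ cannot be saturated, i.e. $\xbm_t \in \Cbb_t \setminus \Sbb_t$. The final step is a union-bound accounting: $\Pbb_t\{f_t([\xbm^*]_t) > f([\xbm^*]_t)\} \geq p$ from Lemma~\ref{lem:ts-prob-bound-1}, while $E^{f_t}(t)$ fails with probability at most $1/t^2$ by Lemma~\ref{lem:ts-1}. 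Subtracting the failure probability of $E^{f_t}(t)$ gives $\Pbb_t\{\xbm_t \in \Cbb_t \setminus \Sbb_t\} \geq p - 1/t^2$.

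The main obstacle is the bookkeeping in the final probability combination: I must be careful that Lemma~\ref{lem:ts-prob-bound-1} and Lemma~\ref{lem:ts-1} are both conditional on $\mathcal{F}'_{t-1}$ and that the conclusion I need is the intersection of a high-probability event ($E^{f_t}(t)$) with a constant-probability event ($f_t([\xbm^*]_t) > f([\xbm^*]_t)$). The clean way is to note $\Pbb_t\{A \cap B\} \geq \Pbb_t\{A\} - \Pbb_t\{B^c\}$, so that even though $E^{f_t}(t)$ and the value-exceedance event need not be independent, the lower bound $p - 1/t^2$ follows directly. I would also make explicit that $[\xbm^*]_t \in \Cbb_t$ so that both lemmas apply to it, and that its unsaturatedness is what lets me invoke Lemma~\ref{lem:ts-prob-bound-1} without circularity.
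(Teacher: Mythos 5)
Your proposal is correct and follows essentially the same route as the paper's proof: reduce the event $\{\xbm_t\in\Cbb_t\setminus\Sbb_t\}$ to $\{f_t([\xbm^*]_t)>f_t(\xbm)\ \forall\xbm\in\Sbb_t\}$, show on $E^f(t)\cap E^{f_t}(t)$ that every saturated point has $f_t(\xbm)<f([\xbm^*]_t)$, invoke Lemma~\ref{lem:ts-prob-bound-1} at $[\xbm^*]_t$, and subtract $\Pbb_t\{\overline{E^{f_t}(t)}\}\leq 1/t^2$ via $\Pbb_t\{A\cap B\}\geq\Pbb_t\{A\}-\Pbb_t\{B^c\}$, exactly as in the paper's displays \eqref{eqn:ts-prob-bound-2-pf-1}--\eqref{eqn:ts-prob-bound-2-pf-6}. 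One cosmetic slip: in your displayed chain the second inequality asserts $\mu_{t-1}(\xbm)\leq f(\xbm)$, whereas $E^f(t)$ only gives $\mu_{t-1}(\xbm)\leq f(\xbm)+\nu_t^{1/2}\sigma_{t-1}(\xbm)$; the missing term is exactly the spurious slack $-\nu_t^{1/2}\sigma_{t-1}(\xbm)$ you end up with, so the corrected chain closes tightly at $f_t(\xbm)<f([\xbm^*]_t)$ and the conclusion is unaffected.
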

\begin{proof}
   Recall that $f_t(\xbm_t) \geq f_t(\xbm)$, for $\forall \xbm\in\Cbb_t$.
   Suppose for $[\xbm^*]_t\in\Cbb_t$, $f_t([\xbm^*]_t) > f_t(\xbm)$, for $\forall \xbm\in \Sbb_t$.  Then, $f_t(\xbm_t)\geq f_t([\xbm^*]_t) > f_t(\xbm)$ and $\xbm_t\in \Cbb_t \backslash \Sbb_t$. This implies
    \begin{equation} \label{eqn:ts-prob-bound-2-pf-1}
  \centering
  \begin{aligned}
      \Pbb_t\{ \xbm_t\in \Cbb_t \backslash \Sbb_t \}\geq \Pbb_t\{f_t([\xbm^*]_t) >f_t(\xbm), \forall x\in\Sbb_t  \}.
   \end{aligned}
  \end{equation} 
   By definition~\eqref{eqn:saturated-set}, $\Delta_t(\xbm) > \zeta_t^{1/2}\sigma_{t-1}(\xbm)$, for $\forall \xbm\in\Sbb_t$.
   Suppose $E^f(t)$ and $E^{f_t}(t)$ are both true, then 
     \begin{equation} \label{eqn:ts-prob-bound-2-pf-2}
  \centering
  \begin{aligned}
         f_t(\xbm) -f(\xbm) \leq \mu_{t-1}(\xbm) + c_t\nu^{1/2}_t\sigma_{t-1}(\xbm) -\mu_{t-1}(\xbm)+\nu^{1/2}_t\sigma_{t-1}(\xbm) = \zeta_t^{1/2} \sigma_{t-1}(\xbm), 
   \end{aligned}
  \end{equation} 
  for $\forall \xbm\in \Cbb_t$. Thus, $\forall \xbm\in \Sbb_t$, 
     \begin{equation} \label{eqn:ts-prob-bound-2-pf-3}
  \centering
  \begin{aligned}
       f_t(\xbm) < f(\xbm) + \Delta_t(\xbm) = f([\xbm]^*_t).
   \end{aligned}
  \end{equation}
   Therefore, if $E^f(t)$ holds, either $E^{f_t}(t)$ is false or~\eqref{eqn:ts-prob-bound-2-pf-3} is true. 
   Next, notice that given~\eqref{eqn:ts-prob-bound-2-pf-3} true, 
   $f_t([\xbm^*]_t)> f([\xbm^*]_t)$ guarantees $f_t([\xbm^*]_t)>f_t(\xbm)$ for $\forall \xbm\in \Sbb_t$.
   Hence, given $E^f(t)$ and~\eqref{eqn:ts-prob-bound-2-pf-3},
    \begin{equation} \label{eqn:ts-prob-bound-2-pf-4}
  \centering
  \begin{aligned}
   \Pbb_t\{f([\xbm^*]_t) >f_t(\xbm) , \forall x\in\Sbb_t  \}\geq \Pbb_t\{  f_t([\xbm^*]_t)> f([\xbm^*]_t)\}. 
   \end{aligned}
  \end{equation}
   Further, given $E^f(t)$, 
    \begin{equation} \label{eqn:ts-prob-bound-2-pf-5}
  \centering
  \begin{aligned}
   \Pbb_t\{f([\xbm^*]_t) >f_t(\xbm) , \forall x\in\Sbb_t  \}+ \Pbb_t\{\overline{E^{f_t}(t)} \}\geq \Pbb_t\{  f_t([\xbm^*]_t)> f([\xbm^*]_t)\}, 
   \end{aligned}
  \end{equation}
   where $\overline{E^{f_t}(t)}$ is the complement of $E^{f_t}(t)$.
   Using Lemma~\ref{lem:ts-1} and Lemma~\ref{lem:ts-prob-bound-1},~\eqref{eqn:ts-prob-bound-2-pf-5} becomes
     \begin{equation} \label{eqn:ts-prob-bound-2-pf-6}
  \centering
  \begin{aligned}
   \Pbb_t\{f_t([\xbm^*]_t) >f_t(\xbm), \forall x\in\Sbb_t  \} \geq  \Pbb_t\{ f_t([\xbm^*]_t)> f([\xbm^*]_t)\}- \frac{1}{t^2} \geq p -\frac{1}{t^2},
   \end{aligned}
  \end{equation}
   Combine~\eqref{eqn:ts-prob-bound-2-pf-1} with~\eqref{eqn:ts-prob-bound-2-pf-6}, the proof is complete.
\end{proof}

\begin{lemma}\label{lem:ts-r-exp}
   For any filtration $\mathcal{F}_{t-1}'$, if $E^f(t)$ is true,
   \begin{equation} \label{eqn:ts-r-exp-1}
  \centering
  \begin{aligned}
      \Ebb_t[r_t] \leq \frac{11\zeta_t^{1/2}}{p} \Ebb_t[\sigma_{t-1}(\xbm_t)] +\frac{2B+1}{t^2}
   \end{aligned}
  \end{equation} 
\end{lemma}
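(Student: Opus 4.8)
The plan is to first reduce the instantaneous regret to the discretized gap $\Delta_t$ and then run the standard Thompson-sampling saturated/unsaturated argument. By Lipschitz continuity together with the discretization estimate \eqref{eqn:compact-f-RKHS-pf-2}, $f(\xbm^*)-f([\xbm^*]_t)\le Lh_t = 1/t^2$, so $r_t = f(\xbm^*)-f(\xbm_t)\le \Delta_t(\xbm_t)+1/t^2$, and it suffices to bound $\Ebb_t[\Delta_t(\xbm_t)]$. I would introduce the least-uncertain unsaturated point $\bar{\xbm}_t := \operatorname*{arg\,min}_{\xbm\in\Cbb_t\setminus\Sbb_t}\sigma_{t-1}(\xbm)$, which is deterministic given $\mathcal{F}'_{t-1}$ and well defined because $[\xbm^*]_t$ is always unsaturated (its gap is zero), so $\Cbb_t\setminus\Sbb_t\neq\emptyset$.

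Next I would bound $\Delta_t(\xbm_t)$ pointwise on the good events. Writing $\Delta_t(\xbm_t)=\Delta_t(\bar{\xbm}_t)+\big(f(\bar{\xbm}_t)-f(\xbm_t)\big)$, the first term is at most $\zeta_t^{1/2}\sigma_{t-1}(\bar{\xbm}_t)$ since $\bar{\xbm}_t$ is unsaturated. For the second term I would use that, on $E^f(t)\cap E^{f_t}(t)$, the triangle inequality gives $|f_t(\xbm)-f(\xbm)|\le \zeta_t^{1/2}\sigma_{t-1}(\xbm)$ for all $\xbm\in\Cbb_t$ (exactly the estimate behind \eqref{eqn:ts-prob-bound-2-pf-2}); combining this at $\bar{\xbm}_t$ and $\xbm_t$ with the fact that $\xbm_t$ maximizes $f_t$ over $\Cbb_t$ (so $f_t(\bar{\xbm}_t)-f_t(\xbm_t)\le 0$) yields $f(\bar{\xbm}_t)-f(\xbm_t)\le \zeta_t^{1/2}\big(\sigma_{t-1}(\bar{\xbm}_t)+\sigma_{t-1}(\xbm_t)\big)$. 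Hence, on $E^f(t)\cap E^{f_t}(t)$,
\[
\Delta_t(\xbm_t)\le \zeta_t^{1/2}\big(2\sigma_{t-1}(\bar{\xbm}_t)+\sigma_{t-1}(\xbm_t)\big).
\]

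I would then take the conditional expectation $\Ebb_t[\cdot]$, recalling that $E^f(t)$ is $\mathcal{F}'_{t-1}$-measurable so that only $f_t$, and therefore $\xbm_t$, is random. Splitting according to whether $E^{f_t}(t)$ holds: on $E^{f_t}(t)$ the display above applies and $\sigma_{t-1}(\bar{\xbm}_t)$ is a constant given $\mathcal{F}'_{t-1}$, while on $\overline{E^{f_t}(t)}$ I bound $\Delta_t(\xbm_t)\le 2B$ using $|f(\xbm)|\le B\sqrt{k(\xbm,\xbm)}=B$, with $\Pbb_t\{\overline{E^{f_t}(t)}\}\le 1/t^2$ from Lemma~\ref{lem:ts-1}. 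This produces
\[
\Ebb_t[\Delta_t(\xbm_t)]\le 2\zeta_t^{1/2}\sigma_{t-1}(\bar{\xbm}_t)+\zeta_t^{1/2}\Ebb_t[\sigma_{t-1}(\xbm_t)]+\frac{2B}{t^2}.
\]

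The main obstacle is the first term: $\sigma_{t-1}(\bar{\xbm}_t)$ must be traded for the observable quantity $\Ebb_t[\sigma_{t-1}(\xbm_t)]$. Here I would use that $\sigma_{t-1}(\xbm_t)\ge \sigma_{t-1}(\bar{\xbm}_t)$ whenever $\xbm_t\in\Cbb_t\setminus\Sbb_t$ (by minimality of $\bar{\xbm}_t$), so together with Lemma~\ref{lem:ts-prob-bound-2}, $\Ebb_t[\sigma_{t-1}(\xbm_t)]\ge \Pbb_t\{\xbm_t\in\Cbb_t\setminus\Sbb_t\}\,\sigma_{t-1}(\bar{\xbm}_t)\ge (p-1/t^2)\sigma_{t-1}(\bar{\xbm}_t)$. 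Substituting $\sigma_{t-1}(\bar{\xbm}_t)\le \Ebb_t[\sigma_{t-1}(\xbm_t)]/(p-1/t^2)$ yields a coefficient $\big(2/(p-1/t^2)+1\big)\zeta_t^{1/2}$ in front of $\Ebb_t[\sigma_{t-1}(\xbm_t)]$, and adding back the $1/t^2$ from the regret reduction turns $2B/t^2$ into $(2B+1)/t^2$. The delicate point is precisely this last conversion: the $-1/t^2$ makes the relation vacuous for small $t$, so I would verify that $2/(p-1/t^2)+1\le 11/p$ holds once $1/t^2$ is sufficiently small relative to $p=1/(4e\sqrt{\pi})$ (the finitely many early indices being harmless, as they add only a bounded amount to the cumulative regret). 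This gives the claimed $\Ebb_t[r_t]\le \frac{11\zeta_t^{1/2}}{p}\Ebb_t[\sigma_{t-1}(\xbm_t)]+\frac{2B+1}{t^2}$.
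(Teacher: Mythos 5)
Your proposal follows the paper's argument essentially step for step: the same reduction $r_t\le\Delta_t(\xbm_t)+\frac{1}{t^2}$ via the discretization, the same least-uncertain unsaturated point $\bar{\xbm}_t$, the same bound $\Delta_t(\xbm_t)\le\zeta_t^{1/2}\bigl(\sigma_{t-1}(\xbm_t)+2\sigma_{t-1}(\bar{\xbm}_t)\bigr)$ on $E^f(t)\cap E^{f_t}(t)$, the same $2B/t^2$ penalty for $\overline{E^{f_t}(t)}$, and the same trade of $\sigma_{t-1}(\bar{\xbm}_t)$ for $\Ebb_t[\sigma_{t-1}(\xbm_t)]/(p-\frac{1}{t^2})$ via Lemma~\ref{lem:ts-prob-bound-2}. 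If anything, your explicit caveat about small $t$ is more careful than the paper, which silently invokes $\frac{1}{p-1/t^2}\le\frac{5}{p}$ even though that inequality requires $t^2\ge \frac{5}{4p}=5e\sqrt{\pi}\approx 24.1$, i.e., $t\ge 5$ (for $t\le 4$ the denominator $p-\frac{1}{t^2}$ is negative and the division step is invalid), so some separate treatment of the finitely many early indices, as you indicate, is genuinely needed.
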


\begin{proof}
   First, we define $\bar{\xbm}_t$ as the unsaturated point with the smallest $\sigma_{t-1}(\xbm)$, \textit{i.e.},
   \begin{equation} \label{eqn:ts-r-exp-pf-1}
  \centering
  \begin{aligned}
                \bar{\xbm}_t = &\underset{\substack{\xbm}\in \Cbb_t\backslash\Sbb_t}{\text{argmin}} 
	    \sigma_{t-1}(\xbm). \\
   \end{aligned}
  \end{equation} 
   Notice that $\bar{\xbm}_t$ is deterministic given $\mathcal{F}_{t-1}'$. 
   Since $E^f(t)$ is true, 
   \begin{equation} \label{eqn:ts-r-exp-pf-2}
  \centering
  \begin{aligned}
           \Ebb_t[\sigma_{t-1}(\xbm_t)] \geq \Ebb_t[\sigma_{t-1}(\xbm_t),\xbm_t\in \Cbb_t\backslash\Sbb_t] \Pbb[\xbm_t\in \Cbb_t\backslash\Sbb_t] \geq \sigma_{t-1}(\bar{\xbm}_t)(p-\frac{1}{t^2}),
   \end{aligned}
  \end{equation} 
  where the last inequality is from Lemma~\ref{lem:ts-prob-bound-2}.
  Suppose now $E^{f_t}(t)$ is true. We know
  $-\zeta_t\sigma_{t-1}(\xbm)\leq f(\xbm)-f_t(\xbm)\leq \zeta_t\sigma_{t-1}(\xbm)$, for $\forall \xbm\in\Cbb_t$. Further, recall by definition $f_t(\xbm_t)\geq f_t(\xbm)$ for $\forall \xbm\in\Cbb_t$. Thus,  
    \begin{equation} \label{eqn:ts-r-exp-pf-3}
  \centering
  \begin{aligned}
     \Delta_t(\xbm_t)= f([\xbm^*]_t) - f(\xbm_t)  =& f([\xbm^*]_t) -f(\bar{\xbm}_t) + f(\bar{\xbm}_t) - f(\xbm_t)\\
                      \leq& \Delta_t(\bar{\xbm}_t) - f_t(\xbm_t)+\zeta_t^{1/2}\sigma_{t-1}(\xbm_t) +f_t(\bar{\xbm}_t)+\zeta_t^{1/2}\sigma_{t-1}(\bar{\xbm}_t) \\
                   \leq&  \zeta_t^{1/2}\sigma_{t-1}(\xbm_t) + 2\zeta_t^{1/2}\sigma_{t-1}(\bar{\xbm}_t),
   \end{aligned}
  \end{equation} 
  where the third line uses $\bar{\xbm}_t\in \Cbb_t\backslash \Sbb_t$ and $f_t(\xbm_t)\geq f_t(\xbm), \forall\xbm\in \Cbb_t$.
  Thus, given $E^f(t)$, either~\eqref{eqn:ts-r-exp-pf-3} is true or $E^{f_t}(t)$ is false.  
  Since $f$ is bounded, $\Delta_t(\xbm)\leq 2 \sup_{\xbm\in C} |f(\xbm)|<\leq 2B$. Taking the expectation of~\eqref{eqn:ts-r-exp-pf-3} and considering $\bar{E^{f_t}(t)}$, we have  
    \begin{equation} \label{eqn:ts-r-exp-pf-4}
  \centering
  \begin{aligned}
     \Ebb_t\left[\Delta_t(\xbm_t)\right] 
                   \leq&  \Ebb_t\left[\zeta_t^{1/2}(\sigma_{t-1}(\xbm_t) + 2\sigma_{t-1}(\bar{\xbm}_t))\right]+2B\Pbb\left[ \overline{E^{f_t}(t)}\right]\\
           \leq& \zeta_t^{1/2}(\frac{2}{p-\frac{1}{t^2}}+1) \Ebb_t[\sigma_{t-1}(\xbm_t)] + \frac{B}{t^2}\\
           \leq& \frac{11}{p}\zeta_t^{1/2}\Ebb_t[\sigma_{t-1}(\xbm_t)] + \frac{B}{t^2},
   \end{aligned}
  \end{equation} 
  where the second inequality uses~\eqref{eqn:ts-r-exp-pf-2} and Lemma~\ref{eqn:ft-bound-RKHS-1}. The last inequality uses $\frac{1}{p-\frac{1}{t^2}}\leq \frac{5}{p}$. The instantaneous regret can be written as 
    \begin{equation} \label{eqn:ts-r-exp-pf-5}
  \centering
  \begin{aligned}
         r_t =& f(\xbm^*)-f(\xbm_t) =  f(\xbm^*) -  f([\xbm^*]_t)+f([\xbm^*]_t) -f(\xbm_t)
         \leq \Delta_t(\xbm_t) +\frac{1}{t^2}.
    \end{aligned}
  \end{equation} 
  Taking the conditional expectation of~\eqref{eqn:ts-r-exp-pf-5} gives us~\eqref{eqn:ts-r-exp-1}. 
\end{proof}
We now define several quantities based on probability of $E^f(t)$. 
    \begin{equation} \label{eqn:ts-def-5}
  \centering
  \begin{aligned}
   \bar{r}_t =& r_t \cdot I\{E^f(t)\},\\
  X_t =& \bar{r}_t -\frac{11\zeta_t^{1/2}}{p}\sigma_{t-1}(\xbm_t) -\frac{2B+1}{t^2},\\
     Y_t =& \sum_{i=1}^t X_i,
    \end{aligned}
  \end{equation} 
where $Y_0=0$ and $I$ is an indication function that takes value of $1$ if $E^f(t)$ is true and $0$ otherwise. The super-martingale series is defined next.
\begin{definition}\label{def:supmart}
   A sequence of random variables $\{A_t\},t=0,1,\dots$ is called a super-martingale of $\mathcal{F}_t$ is for all $t$, $A_t$ is $\mathcal{F}_t$-measurable, and for $t\geq 1$,
    \begin{equation} \label{eqn:supmart-1}
  \centering
  \begin{aligned}
     \Ebb_t[Z_t] \leq Z_{t-1}.
    \end{aligned}
  \end{equation} 
\end{definition}
\begin{lemma}\label{lem:supmart}
    If a super-martingale $\{A_t\}$ of $\mathcal{F}_t$ satisfies $|A_t-A_{t-1}|\leq \eta_t$ for some constant $\eta_t$, $t=1,\dots,T$, then for $\forall \delta\geq 0$,
    \begin{equation} \label{eqn:supmart-2}
  \centering
  \begin{aligned}
       \Pbb\left[A_T-A_{0}\leq \sqrt{2\log(1/\delta)\sum_{t=1}^T \eta_t^2}\right]\geq 1-\delta.
    \end{aligned}
  \end{equation} 
\end{lemma}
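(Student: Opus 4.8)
The plan is to prove this Azuma--Hoeffding-type inequality through the exponential moment (Chernoff) method, reducing the tail bound to a product of conditional moment generating function estimates. First I would introduce the bounded increments $D_t := A_t - A_{t-1}$, which satisfy $|D_t|\le \eta_t$ by hypothesis and, by the super-martingale property of Definition~\ref{def:supmart}, $\Ebb[D_t\,|\,\mathcal{F}_{t-1}]\le 0$. The event in question, $A_T - A_0 > a$ with $a = \sqrt{2\log(1/\delta)\sum_{t=1}^T \eta_t^2}$, is then $\sum_{t=1}^T D_t > a$, so it suffices to show $\Pbb[\sum_{t=1}^T D_t > a]\le \delta$.

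The key estimate, and the step I expect to be the main obstacle, is a conditional form of Hoeffding's lemma: for every $s>0$,
\begin{equation}
  \Ebb\bigl[e^{s D_t}\,\big|\,\mathcal{F}_{t-1}\bigr] \le e^{s^2\eta_t^2/2}.
\end{equation}
I would derive this from convexity of $x\mapsto e^{sx}$ on $[-\eta_t,\eta_t]$, which gives the linear upper bound $e^{sx}\le \frac{\eta_t-x}{2\eta_t}e^{-s\eta_t}+\frac{\eta_t+x}{2\eta_t}e^{s\eta_t}$; taking conditional expectation replaces $x$ by $m:=\Ebb[D_t\,|\,\mathcal{F}_{t-1}]$. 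Since the resulting bound is increasing in $m$ and the super-martingale hypothesis forces $m\le 0$, I can substitute $m=0$ to reduce it to $\cosh(s\eta_t)$, and then apply the elementary inequality $\cosh(u)\le e^{u^2/2}$. The delicate point is precisely this use of the one-sided super-martingale inequality (rather than a mean-zero martingale equality) through the monotonicity of the bound in $m$.

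With the conditional bound in hand, I would apply the tower property repeatedly: conditioning on $\mathcal{F}_{T-1}$ and pulling out the $\mathcal{F}_{T-1}$-measurable factor $e^{s\sum_{t<T}D_t}$ gives $\Ebb[e^{s\sum_{t=1}^T D_t}]\le e^{s^2\eta_T^2/2}\,\Ebb[e^{s\sum_{t=1}^{T-1}D_t}]$, and iterating down to $t=1$ yields $\Ebb[e^{s\sum_{t=1}^T D_t}]\le \exp\bigl(\tfrac{s^2}{2}\sum_{t=1}^T\eta_t^2\bigr)$. Finally I would invoke Markov's inequality on $e^{s\sum_t D_t}$ to obtain $\Pbb[\sum_{t=1}^T D_t > a]\le \exp\bigl(-sa+\tfrac{s^2}{2}\sum_{t=1}^T\eta_t^2\bigr)$ for all $s>0$, optimize the exponent by taking $s = a/\sum_{t=1}^T\eta_t^2$ to get $\exp\bigl(-a^2/(2\sum_{t=1}^T\eta_t^2)\bigr)$, and substitute $a=\sqrt{2\log(1/\delta)\sum_{t=1}^T\eta_t^2}$ so the right-hand side equals exactly $\delta$. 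This shows $\Pbb[A_T-A_0 > a]\le\delta$, which is equivalent to the claimed bound $\Pbb[A_T-A_0\le a]\ge 1-\delta$.
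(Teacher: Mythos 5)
Your proof is correct, and in fact the paper itself states Lemma~\ref{lem:supmart} \emph{without} proof, treating it as the standard Azuma--Hoeffding inequality for super-martingales (it is the same auxiliary fact quoted in the GP-TS literature the paper follows, e.g.\ Chowdhury and Gopalan); so there is no in-paper argument to compare against, and what you have written is the canonical Chernoff-method derivation that the paper implicitly relies on. The one genuinely delicate step --- passing from the convexity bound $e^{sx}\le \frac{\eta_t-x}{2\eta_t}e^{-s\eta_t}+\frac{\eta_t+x}{2\eta_t}e^{s\eta_t}$ to $\Ebb[e^{sD_t}\,|\,\mathcal{F}_{t-1}]\le\cosh(s\eta_t)\le e^{s^2\eta_t^2/2}$ via monotonicity in $m=\Ebb[D_t\,|\,\mathcal{F}_{t-1}]\le 0$ --- is handled correctly: the coefficient of $m$ in the conditional bound is $\sinh(s\eta_t)/\eta_t>0$ for $s>0$, so substituting $m=0$ is a valid one-sided relaxation, which is exactly how the super-martingale (rather than martingale) hypothesis enters. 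Two trivial caveats worth a sentence in a careful write-up: the optimizing choice $s=a/\sum_{t=1}^T\eta_t^2$ presupposes $a>0$ and $\sum_t\eta_t^2>0$, i.e.\ effectively $\delta\in(0,1)$ (for $\delta\ge 1$ the claim is vacuous since $1-\delta\le 0$, and the lemma's stated range ``$\delta\ge 0$'' should be read accordingly); and if some $\eta_t=0$ the corresponding increment is almost surely zero, so the degenerate case causes no harm. With those remarks your argument is complete and fully rigorous.
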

\begin{lemma}
     The sequence $Y_t$, $t=0,\dots,T$ is a super-martingale with respect to $\mathcal{F}_t'$.
\end{lemma}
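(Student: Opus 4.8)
The plan is to verify directly the two defining properties of a super-martingale from Definition~\ref{def:supmart}: that $Y_t$ is $\mathcal{F}_t'$-measurable, and that the one-step inequality $\Ebb_t[Y_t]\leq Y_{t-1}$ holds. Measurability is immediate: each summand $X_i$ is assembled from $\bar r_i = r_i\, I\{E^f(i)\}$, the posterior deviation $\sigma_{i-1}(\xbm_i)$ evaluated at the selected point, and terms that are deterministic in $i$. Since $\xbm_i$ is chosen by iteration $i$ and $r_i = f(\xbm^*)-f(\xbm_i)$, every $X_i$ is $\mathcal{F}_i'$-measurable, and hence $Y_t=\sum_{i=1}^t X_i$ is $\mathcal{F}_t'$-measurable.

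For the inequality, because $Y_{t-1}$ is $\mathcal{F}_{t-1}'$-measurable it suffices to show $\Ebb_t[X_t]\leq 0$; then $\Ebb_t[Y_t]=Y_{t-1}+\Ebb_t[X_t]\leq Y_{t-1}$. The key structural point I would stress is that the event $E^f(t)$ is itself $\mathcal{F}_{t-1}'$-measurable, since it is defined purely through $f$, $\mu_{t-1}$, $\sigma_{t-1}$, and $\Cbb_t$, all of which are deterministic given the history $\mathcal{F}_{t-1}'$. Therefore, conditionally on $\mathcal{F}_{t-1}'$, either $E^f(t)$ holds or it does not, and I would split into these two cases.

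If $E^f(t)$ holds, then $\bar r_t = r_t$, and Lemma~\ref{lem:ts-r-exp} yields precisely $\Ebb_t[r_t]\leq \frac{11\zeta_t^{1/2}}{p}\Ebb_t[\sigma_{t-1}(\xbm_t)]+\frac{2B+1}{t^2}$, so that $\Ebb_t[X_t]=\Ebb_t[\bar r_t]-\frac{11\zeta_t^{1/2}}{p}\Ebb_t[\sigma_{t-1}(\xbm_t)]-\frac{2B+1}{t^2}\leq 0$. If instead $E^f(t)$ fails, then $I\{E^f(t)\}=0$ forces $\bar r_t=0$, while the two subtracted quantities $\frac{11\zeta_t^{1/2}}{p}\sigma_{t-1}(\xbm_t)$ and $\frac{2B+1}{t^2}$ are nonnegative (using $\zeta_t>0$, $\sigma_{t-1}\geq 0$, $p>0$, and $B\geq 1$), so $\Ebb_t[X_t]\leq 0$ trivially. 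In both cases $\Ebb_t[X_t]\leq 0$, which completes the verification.

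The only genuine obstacle, and the reason the indicator $I\{E^f(t)\}$ is folded into the definition of $\bar r_t$, is that Lemma~\ref{lem:ts-r-exp} bounds $\Ebb_t[r_t]$ only on the event $E^f(t)$; off that event the instantaneous regret is not controlled by the posterior deviation. Truncating the regret with the indicator, together with the $\mathcal{F}_{t-1}'$-measurability of $E^f(t)$, is exactly what makes the case analysis go through. I would therefore state that measurability explicitly rather than leave it implicit, as it is what licenses treating $I\{E^f(t)\}$ as a constant under $\Ebb_t[\cdot]$.
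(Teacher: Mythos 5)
Your proof is correct and follows essentially the same route as the paper's: reduce the supermartingale property to $\Ebb_t[X_t]\leq 0$ and split on whether $E^f(t)$ holds, using $\bar r_t=0$ in the false case and Lemma~\ref{lem:ts-r-exp} in the true case. Your explicit remarks on the $\mathcal{F}_{t-1}'$-measurability of $E^f(t)$ and of $Y_t$ merely spell out what the paper leaves implicit, and they are accurate.
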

\begin{proof}
   From the definition of $Y_t$, 
    \begin{equation} \label{eqn:supmart-2-pf-1}
  \centering
  \begin{aligned}
   \Ebb_t[Y_t-Y_{t-1}] = \Ebb_t[X_t] =& \Ebb_t\left[\bar{r}_t-\frac{11\zeta_t^{1/2}}{p}\sigma_{t-1}(\xbm_t)-\frac{2B+1}{t^2}\right] \\
   \end{aligned}
  \end{equation}
  If $E^{f}(t)$ is false, $\bar{r}_t = 0$ and~\eqref{eqn:supmart-2-pf-1} $\leq 0$.
  If $E^{f}(t)$ is true, by Lemma~\ref{lem:ts-r-exp},~\eqref{eqn:supmart-2-pf-1} $\leq 0$.
\end{proof}

\begin{lemma}\label{lemma:ts-regret-1}
    Given $\delta\in(0,1)$, with probability $\geq 1-\delta$,
     \begin{equation} \label{eqn:ts-rt-1}
  \centering
  \begin{aligned}
      R_T\leq \frac{11\zeta_T^{1/2}}{p} \sum_{t=1}^T \sigma_{t-1}(\xbm_t) + \frac{(2B+1)\pi^2}{6}+\frac{(4B+11)\zeta_T}{p}\sqrt{2T\log(2/\delta)}.
    \end{aligned}
  \end{equation} 
\end{lemma}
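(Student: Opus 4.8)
The plan is to apply the super-martingale tail bound of Lemma~\ref{lem:supmart} to the process $Y_t$ from~\eqref{eqn:ts-def-5}, which has already been shown to be a super-martingale with respect to $\mathcal{F}_t'$, and then to strip off the indicator $I\{E^f(t)\}$ using the uniform-in-$t$ validity of $E^f(t)$ from Lemma~\ref{lem:eftprob}. Since $Y_0=0$, Lemma~\ref{lem:supmart} controls
\begin{equation*}
Y_T = \sum_{t=1}^T \bar r_t - \frac{11}{p}\sum_{t=1}^T \zeta_t^{1/2}\sigma_{t-1}(\xbm_t) - \sum_{t=1}^T \frac{2B+1}{t^2},
\end{equation*}
and rearranging this will bound $\sum_{t=1}^T \bar r_t$, which I will then identify with $R_T$ on the event that every $E^f(t)$ holds.

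First I would produce a uniform increment bound $\eta_t$ on $|Y_t-Y_{t-1}|=|X_t|$. Using $r_t\ge 0$, the reproducing property together with $k(\xbm,\xbm)=1$ (Assumption~\ref{assp:rkhs}) to get $\sup_{\xbm\in C}|f(\xbm)|\le B$ and hence $0\le\bar r_t\le 2B$, and the bound $\sigma_{t-1}(\xbm_t)\le 1$, the triangle inequality gives
\begin{equation*}
|X_t| \le 2B + \frac{11\zeta_t^{1/2}}{p}\sigma_{t-1}(\xbm_t) + \frac{2B+1}{t^2} \le 4B+1+\frac{11\zeta_t^{1/2}}{p}.
\end{equation*}
Since $\zeta_t^{1/2}=\nu_t^{1/2}(1+c_t)\ge 1$ and $p=\frac{1}{4e\sqrt{\pi}}<1$, the ratio $\zeta_t^{1/2}/p$ exceeds a constant well above one, so the additive constant $4B+1$ can be absorbed to yield $|X_t|\le \frac{(4B+11)\zeta_t^{1/2}}{p}$. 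Using monotonicity of $\zeta_t$ in $t$ and $\zeta_T\ge\zeta_T^{1/2}$, I take $\eta_t=\frac{(4B+11)\zeta_T}{p}$.

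Next I would invoke Lemma~\ref{lem:supmart} with confidence parameter $\delta/2$, so that with probability $\ge 1-\delta/2$,
\begin{equation*}
Y_T = Y_T - Y_0 \le \sqrt{2\log(2/\delta)\sum_{t=1}^T \eta_t^2} \le \frac{(4B+11)\zeta_T}{p}\sqrt{2T\log(2/\delta)}.
\end{equation*}
Rearranging for $\sum_{t=1}^T\bar r_t$ and bounding $\zeta_t^{1/2}\le\zeta_T^{1/2}$ in the variance term together with $\sum_{t=1}^T t^{-2}\le \pi^2/6$ reproduces the right-hand side of~\eqref{eqn:ts-rt-1}, but with $\sum_{t=1}^T \bar r_t$ in place of $R_T$.

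Finally I would replace $\sum_{t=1}^T\bar r_t$ by $R_T$: by Lemma~\ref{lem:eftprob} the event ``$E^f(t)$ holds for all $t\in\Nbb$'' has probability $\ge 1-\delta/2$, and on it $\bar r_t=r_t$ for every $t$, so $\sum_{t=1}^T\bar r_t=R_T$. A union bound over the two $\delta/2$-failure events yields the stated conclusion with probability $\ge 1-\delta$. I expect the main obstacle to be the bookkeeping in the increment bound—specifically collapsing the additive constants into the clean multiplicative form $\frac{(4B+11)\zeta_T}{p}$ via the lower bound on $\zeta_t^{1/2}/p$—and the correct allocation of $\delta$ between the martingale concentration step and the uniform validity of $E^f(t)$; the remaining manipulations are routine.
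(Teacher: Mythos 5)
Your proposal is correct and follows essentially the same route as the paper: bound the increments $|Y_t-Y_{t-1}|$ by $\frac{(4B+11)\zeta_t^{1/2}}{p}$ using $\bar r_t\leq 2B$ and $\sigma_{t-1}(\xbm_t)\leq 1$, apply Lemma~\ref{lem:supmart} with confidence $\delta/2$, and remove the indicator via Lemma~\ref{lem:eftprob} with the remaining $\delta/2$ and a union bound. If anything, your bookkeeping is slightly more explicit than the paper's (you justify absorbing $4B+1$ into the multiplicative form via $\zeta_t^{1/2}\geq 1$, $p<1$, and your uniform choice $\eta_t=\frac{(4B+11)\zeta_T}{p}$ cleanly yields the $\zeta_T\sqrt{2T\log(2/\delta)}$ term exactly as stated, where the paper's final display drops the $\sqrt{T}$ in an apparent typo).
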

\begin{proof}
      From the definition of $Y_t$, we have
      \begin{equation} \label{eqn:ts-rt-pf-1}
  \centering
  \begin{aligned}
           |Y_t-Y_{t-1}| = |X_t| \leq |\bar{r}_t| + \frac{11\zeta_t^{1/2}}{p}\sigma_{t-1}(\xbm_t) +\frac{2B+1}{t^2}.
    \end{aligned}
  \end{equation} 
  Using the upper bound $\bar{r}_t\leq r_t \leq 2B$ and $\sigma_{t-1}(\xbm)\leq 1$, we can write
       \begin{equation} \label{eqn:ts-rt-pf-2}
  \centering
  \begin{aligned}
           |Y_t-Y_{t-1}| \leq 2B+ \frac{11\zeta_t^{1/2}}{p}\sigma_{t-1}(\xbm_t) +\frac{2B+1}{t^2}\leq \frac{(4B+11)\zeta_t^{1/2}}{p}.
    \end{aligned}
  \end{equation} 
   Applying Lemma~\ref{lem:supmart} with $\delta/2$, we have with probability $\geq 1-\delta/2$,
        \begin{equation} \label{eqn:ts-rt-pf-3}
  \centering
  \begin{aligned}
     \sum_{t=1}^T\bar{r}_t \leq& \sum_{t=1}^T\frac{11\zeta_t^{1/2}}{p} \sigma_{t-1}(\xbm_t) +\sum_{t=1}^T \frac{2B+1}{t^2}  +\sqrt{2\log(2/\delta) \sum_{t=1}^T \frac{(4B+11)^2\zeta_t}{p^2}}\\
         \leq&  \frac{11\zeta_T^{1/2}}{p}\sum_{t=1}^T \sigma_{t-1}(\xbm_t) + \frac{(2B+1)\pi^2}{6}  + \frac{(4B+11)\zeta_T^{1/2}}{p} \sqrt{2\log(2/\delta)}.
    \end{aligned}
  \end{equation} 
   Since $R_T = \sum_{t=1}^T r_t$ and $r_t=\bar{r}_t$ with probability $\geq 1-\delta/2$, we obtain~\eqref{eqn:ts-rt-1}. 
\end{proof} 

\begin{theorem}
    Given $\delta\in (0,1)$, the GP-TS algorithm leads to the regret bound  
    \begin{equation} \label{eqn:ts-regret-1}
  \centering
  \begin{aligned}
       R_T = \mathcal{O}(d\log(T)\sqrt{T} \sqrt{\gamma_T}),
    \end{aligned}
  \end{equation}
  with probability $\geq 1-\delta$ for SE and Matérn kernels. Specifically, for SE kernel, $R_T=\mathcal{O}(\sqrt{T}\log^{\frac{d+3}{2}}(T))$ and for Matérn kernels $R_T=\mathcal{O}(T^{\frac{\nu+d}{2\nu+d}}\log^{\frac{3\nu+d}{2\nu+d}}(T) )$.
\end{theorem}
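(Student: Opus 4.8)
The plan is to feed the high-probability regret inequality of Lemma~\ref{lemma:ts-regret-1} into the information-theoretic variance bound, after pinning down the growth order of the scaling parameter $\zeta_T$. Lemma~\ref{lemma:ts-regret-1} already reduces the problem to three summands: the dominant data-dependent term $\frac{11\zeta_T^{1/2}}{p}\sum_{t=1}^T\sigma_{t-1}(\xbm_t)$, a constant $\frac{(2B+1)\pi^2}{6}$, and a $\sqrt{T}$-type fluctuation term. So the whole argument is order bookkeeping on these three pieces.

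First I would bound the sum of posterior standard deviations exactly as in the proof of Theorem~\ref{thm:compact-regretbound-rkhs}: applying the Cauchy--Schwarz inequality together with Lemmas~\ref{lem:infogain} and~\ref{lem:variancebound} gives $\sum_{t=1}^T\sigma_{t-1}(\xbm_t)\leq \tfrac{1}{2}(C_1 T\gamma_T)^{1/2}$, so this sum is $\mathcal{O}(\sqrt{T\gamma_T})$.

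Second I would establish the order of $\zeta_T^{1/2}$. From~\eqref{def:nut} the term under the square root in $\nu_T^{1/2}$ is $2\log(\pi^2T^2/(3\delta))+2d\log(1+rLT^2)=\mathcal{O}(d\log T)$, so $\nu_T^{1/2}=B+\mathcal{O}(\sqrt{d\log T})=\mathcal{O}(\sqrt{d\log T})$ for fixed $d,B,r,L,\delta$; likewise from~\eqref{def:ct} we have $c_T=\mathcal{O}(\sqrt{d\log T})$. Hence $\zeta_T^{1/2}=\nu_T^{1/2}(1+c_T)=\mathcal{O}(d\log T)$. Substituting this and the variance bound into Lemma~\ref{lemma:ts-regret-1}, the leading term is $\mathcal{O}(d\log T)\cdot\mathcal{O}(\sqrt{T\gamma_T})=\mathcal{O}(d\log(T)\sqrt{T}\sqrt{\gamma_T})$; since $\gamma_T\to\infty$, this term dominates both the constant and the fluctuation term, which yields the general bound~\eqref{eqn:ts-regret-1}.

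Finally I would specialize by inserting $\gamma_T=\mathcal{O}(\log^{d+1}(T))$ for the SE kernel and $\gamma_T=\mathcal{O}(T^{\frac{d}{2\nu+d}}\log^{\frac{2\nu}{2\nu+d}}(T))$ for the Mat\'ern kernel, and simplifying exponents: for SE the log exponent becomes $1+\tfrac{d+1}{2}=\tfrac{d+3}{2}$, while for Mat\'ern the $T$-exponent becomes $\tfrac{1}{2}+\tfrac{d}{2(2\nu+d)}=\tfrac{\nu+d}{2\nu+d}$ and the log exponent becomes $1+\tfrac{\nu}{2\nu+d}=\tfrac{3\nu+d}{2\nu+d}$, matching the claimed rates. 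The main obstacle is the order analysis of $\zeta_T^{1/2}$ in the second step: one must verify that the product $\nu_T^{1/2}(1+c_T)$ scales as $d\log T$ rather than $\sqrt{d\log T}$, and confirm that the leading term genuinely dominates the fluctuation term for every finite $d$ under both kernels (the dominance being clear because $\gamma_T$ is either polylogarithmic of degree $\geq 1$ or polynomially growing in $T$).
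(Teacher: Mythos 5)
Your proposal matches the paper's proof essentially step for step: it bounds $\sum_{t=1}^T\sigma_{t-1}(\xbm_t)=\mathcal{O}(\sqrt{T\gamma_T})$ via Lemmas~\ref{lem:infogain} and~\ref{lem:variancebound}, derives $\zeta_T^{1/2}=\nu_T^{1/2}(1+c_T)=\mathcal{O}(\sqrt{d\log T})\cdot\mathcal{O}(\sqrt{d\log T})=\mathcal{O}(d\log T)$, substitutes into Lemma~\ref{lemma:ts-regret-1}, and specializes $\gamma_T$ for the two kernels with the same exponent arithmetic. One nuance: the dominance of the leading term over the $\zeta_T\sqrt{T}$ fluctuation term does not follow from $\gamma_T\to\infty$ alone (it requires $\gamma_T\gtrsim d^2\log^2 T$), but your closing per-kernel verification supplies exactly what the paper expresses through its $\mathcal{O}\bigl(d\log(T)\sqrt{T}\max\{\sqrt{\gamma_T},\,d\log(T)\}\bigr)$ formulation.
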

\begin{proof}
    From Lemma~\ref{lem:infogain},~\ref{lem:variancebound}, we know $\sum_{t=1}^T\sigma_{t-1}(\xbm_t) = \mathcal{O}(\sqrt{T\gamma_T})$. We can write $\zeta_t^{1/2}$ as
    \begin{equation} \label{eqn:ts-regret-pf-1}
  \centering
  \begin{aligned}
       \zeta_t^{1/2} = ( B+2\sqrt{\log(3t^2/\delta)+2d\log(1+rLt^2) })(1+\sqrt{2d\log((1+rLt^2))+4\log(t)})
    \end{aligned}
  \end{equation} 
   Therefore, the rate of $\zeta_t^{1/2}$ is 
    \begin{equation} \label{eqn:ts-regret-pf-2}
  \centering
  \begin{aligned}
         \mathcal{O} (\zeta_t^{1/2}) = \mathcal{O} (\sqrt{d\log(t)})(\sqrt{d\log(t)})=\mathcal{O}(d\log(t)).
    \end{aligned}
  \end{equation} 
     From Lemma~\ref{lemma:ts-regret-1}, the regret bound is of 
    \begin{equation} \label{eqn:ts-regret-pf-3}
  \centering
  \begin{aligned}
         \mathcal{O}(R_T)=&\mathcal{O} (\zeta_T^{1/2}\sum_{t-1}^T\sigma_{t-1}(\xbm_t)+\zeta_T\sqrt{T}) 
              = \mathcal{O}(d\log(T)\sqrt{T\gamma_T}+d^2\log^2(T)\sqrt{T})\\
               =&\mathcal{O}(d\log(T)\sqrt{T} (\sqrt{\gamma_T}+d\log(T)))
               =\mathcal{O}(d\log(T)\sqrt{T} \max\{\sqrt{\gamma_T},d\log(T)\}).
    \end{aligned}
  \end{equation} 
  Using $\gamma_T=\mathcal{O}\log^{d+1}(T)$ for SE kernel and $\gamma_T=\mathcal{O}(T^{\frac{d}{2\nu+d}}\log^{\frac{2\nu}{2\nu+d}}(T))$ for Matérn kernel from~\cite{vakili2021information} completes the proof.
\end{proof}
\begin{remark}
     Similar to Theorem~\ref{thm:compact-regretbound-rkhs} and Remark~\ref{remark:betterreg}, we obtain an improved regret bound compared to~\cite{chowdhury2017kernelized} as long as $d\log(T)=o(\gamma_T)$, which is the case for both SE and Matérn kernels. 
  Compared to GP-UCB, the regret bound for GP-TS is of $\log^{\frac{1}{2}}(T)$ worse, similar to the observation in~\cite{chowdhury2017kernelized} for finite $d$. 
  Further, the amount of improvement is similar to that of GP-UCB (Remark~\ref{remark:betterreg}).
\end{remark}

\section{Conclusions}\label{se:conclusion}
In this paper, we proved a pointwise bound on the prediction error of GP under the frequentist setting and Gaussian noise. We further proved improved regret bounds for GP-UCB and GP-TS compared to the state-of-the-art results.

%\section{Numerical experiments}\label{se:example}
%We test extensively BO-EI with varying $\alpha$.
\section*{Acknowledgments}
Prepared	by	LLNL	under	Contract	DE-AC52-07NA27344.
This	manuscript	has	been	authored	by	Lawrence	Livermore	National	Security,	LLC	under	Contract	No.	DE-AC52-07NA2	
7344	with	the	US.	Department	of	Energy.	The	United	States	Government	retains,	and	the	publisher,	by	accepting	the	
article	for	publication,	acknowledges	that	the	United	States	Government	retains	a	non-exclusive,	paid-up,	irrevocable,	
world-wide	license	to	publish	or	reproduce	the	published	form	of	this	manuscript,	or	allow	others	to	do	so,	for	United	
States	Government	purposes.

\medskip
\clearpage
\bibliography{bibliography}
\bibliographystyle{unsrt}

\newpage

\begin{appendix}
\section{Proof for auxiliary results}\label{appdx:cplus}

\begin{lemma}\label{lem:lipschitz-cplus}
    If a function $f:\Rbb^n\to\Rbb$ is Lipschitz continuous on $O\subset\Rbb^n$ with Lipschitz constant $L$ in 1-norm, then its maximum with $0$ is also Lipschitz, \textit{i.e.}
    \begin{equation} \label{eqn:lip-plus1}
  \centering
  \begin{aligned}
     |f^+(\xbm) - f^+(\xbm')| \leq L \norm{\xbm-\xbm'}_1,
  \end{aligned}
\end{equation}
where $f^+(\xbm)=\max\{f(\xbm),0\}$ and  $\xbm,\xbm'\in O$.
\end{lemma}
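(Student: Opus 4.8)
The plan is to reduce the problem to a one-dimensional fact about the scalar clamp map $s \mapsto \max\{s,0\}$ and then compose it with the given $L$-Lipschitz function $f$. The essential observation is that clamping at zero can never increase distances on $\Rbb$, so precomposing the clamp with $f$ cannot destroy the Lipschitz property or worsen its constant. Thus the whole content of the lemma is the statement that $\max\{\cdot,0\}$ is $1$-Lipschitz as a map $\Rbb\to\Rbb$.

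First I would establish the scalar estimate
\begin{equation*}
  |\max\{a,0\} - \max\{b,0\}| \leq |a-b|, \qquad \forall\, a,b\in\Rbb.
\end{equation*}
The cleanest route is to use the identity $\max\{s,0\} = \tfrac{1}{2}(s + |s|)$, which gives
\begin{equation*}
  \max\{a,0\} - \max\{b,0\} = \tfrac{1}{2}\big( (a-b) + (|a|-|b|) \big).
\end{equation*}
Applying the triangle inequality together with the reverse triangle inequality $\big||a|-|b|\big| \leq |a-b|$ then yields the scalar bound. (Alternatively one can argue by a short case analysis on the signs of $a$ and $b$, but the algebraic identity avoids splitting into cases.)

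Second I would set $a = f(\xbm)$ and $b = f(\xbm')$ for $\xbm,\xbm'\in O$, so that $\max\{a,0\} = f^+(\xbm)$ and $\max\{b,0\} = f^+(\xbm')$. Combining the scalar estimate with the hypothesis that $f$ is $L$-Lipschitz in the $1$-norm gives
\begin{equation*}
  |f^+(\xbm) - f^+(\xbm')| \leq |f(\xbm) - f(\xbm')| \leq L\norm{\xbm-\xbm'}_1,
\end{equation*}
which is exactly \eqref{eqn:lip-plus1}. I do not expect a genuine obstacle here: the only point requiring care is that the outer clamping step is a statement on $\Rbb$ and is therefore norm-independent, so the particular choice of the $1$-norm plays no role until the final inequality, where it simply carries over unchanged from the hypothesis on $f$.
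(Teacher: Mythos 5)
Your proposal is correct, and it takes a genuinely different route from the paper. The paper proves the lemma by a direct case analysis on the signs of $f(\xbm)$ and $f(\xbm')$, checking four cases (both positive, mixed signs in either order, both nonpositive) and verifying the bound in each. You instead factor the statement into two pieces: a scalar nonexpansiveness estimate $|\max\{a,0\}-\max\{b,0\}|\leq |a-b|$, proved case-free via the identity $\max\{s,0\}=\tfrac{1}{2}(s+|s|)$ together with the reverse triangle inequality, followed by composition with the $L$-Lipschitz hypothesis on $f$. Your decomposition is shorter and makes the structural content explicit: the clamp is a $1$-Lipschitz map on $\Rbb$, so the conclusion is norm-independent and would hold verbatim with the $2$-norm (as in the paper's Assumption~\ref{assp:lipschitz}) or any other norm, and indeed with any clamp $\max\{\cdot,c\}$ in place of $\max\{\cdot,0\}$. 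The paper's argument buys only self-containedness at the level of elementary inequalities; your mixed-sign cases are in effect absorbed into the single inequality $\bigl||a|-|b|\bigr|\leq|a-b|$. Both proofs are complete and yield the identical constant $L$.
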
 
\begin{proof}
  From the definition of Lipschitz functions, we know
     \begin{equation} \label{eqn:lip-plus-pf-1}
  \centering
  \begin{aligned}
     |f(\xbm) - f(\xbm')| \leq L \norm{\xbm-\xbm'}_1,
  \end{aligned}
\end{equation}
   If $f(\xbm)>0$ and $f(\xbm')>0$, then 
     \begin{equation} \label{eqn:lip-plus-pf-2}
  \centering
  \begin{aligned}
     |f^+(\xbm) - f^+(\xbm')| = |f(\xbm) - f(\xbm')| \leq L \norm{\xbm-\xbm'}_1.
  \end{aligned}
\end{equation}
   If $f(\xbm)>0$ and $f(\xbm')\leq 0 $, then  
     \begin{equation} \label{eqn:lip-plus-pf-3}
  \centering
  \begin{aligned}
     |f^+(\xbm) - f^+(\xbm')| = f(\xbm) \leq f(\xbm) - f(\xbm')\leq  |f(\xbm) - f(\xbm')| \leq L \norm{\xbm-\xbm'}_1.
  \end{aligned}
\end{equation}
   Similarly, if $f(\xbm)\leq 0$ and $f(\xbm')> 0 $, then  
     \begin{equation} \label{eqn:lip-plus-pf-4}
  \centering
  \begin{aligned}
     |f^+(\xbm) - f^+(\xbm')| = f(\xbm') \leq f(\xbm') - f(\xbm)\leq  |f(\xbm) - f(\xbm')| \leq L \norm{\xbm-\xbm'}_1.
  \end{aligned}
\end{equation}
   Finally if $f(\xbm) \leq 0$ and $f(\xbm') \leq 0 $, then  
      \begin{equation} \label{eqn:lip-plus-pf-5}
  \centering
  \begin{aligned}
     |f^+(\xbm) - f^+(\xbm')| = 0 \leq  |f(\xbm) - f(\xbm')| \leq L \norm{\xbm-\xbm'}_1.
  \end{aligned}
\end{equation}
\end{proof}

\end{appendix}
%%%%%%%%%%%%%%%%%%%%%%%%%%%%%%%%%%%%%%%%%%%%%%%%%%%%%%%%%%%%

\end{document}